\documentclass{article}
\usepackage[utf8]{inputenc}

\usepackage{amssymb}
\usepackage{amsmath}
\usepackage{amsthm}
\usepackage{latexsym}
\usepackage{graphicx}
\usepackage{color}
\usepackage{graphics}
\usepackage[round]{natbib}
\usepackage{cancel}
\usepackage{thm-restate}
\usepackage{mathtools}
\usepackage[mathscr]{euscript}
\usepackage{hyperref}

\usepackage{algorithm}
\usepackage{algorithmic}

\usepackage{fullpage}

\usepackage{MnSymbol}
\DeclareMathAlphabet\mathbb{U}{msb}{m}{n}
\usepackage{xpatch}

\DeclareMathOperator*{\E}{\mathbb E}

\DeclareMathOperator{\poly}{poly}
\DeclareMathOperator{\Reg}{\mathsf{Reg}}

\newtheorem{theorem}{Theorem}
\newtheorem{lemma}{Lemma}
\newtheorem{corollary}{Corollary}

\newtheorem{conjecture}{Conjecture}

\theoremstyle{definition}

\newcommand{\dest}{\mathrm{dest}}
\newcommand{\src}{\mathrm{src}}

\newcommand{\Bern}{\mathsf{Bern}}

\newcommand{\cA}{\mathcal{A}}
\newcommand{\cB}{\mathcal{B}}
\newcommand{\cC}{\mathcal{C}}

\newcommand{\cL}{\mathcal{L}}

\newcommand{\cS}{\mathcal{S}}

\newcommand{\cX}{\mathcal{X}}

\newcommand{\bell}{{\boldsymbol \ell}}
\newcommand{\blambda}{{\boldsymbol \lambda}}
\newcommand{\bmu}{{\boldsymbol \mu}}
\newcommand{\UB}{\mathsf{UB}}
\newcommand{\LB}{\mathsf{LB}}
\newcommand{\ILB}{\mathsf{ILB}}

\newcommand{\eps}{\varepsilon}

\newcommand{\ignore}[1]{}

\newcommand{\Ii}{\mathcal I}

\title{Adversarial Online Learning with Temporal Feedback Graphs}
\author{
Khashayar Gatmiry \thanks{MIT, {\tt gatmiry@mit.edu}} 
\and Jon Schneider \thanks{Google Research, {\tt jschnei@google.com}} 
}

\begin{document}

\maketitle

\begin{abstract}
We study a variant of prediction with expert advice where the learner's action at round $t$ is only allowed to depend on losses on a specific subset of the rounds (where the structure of which rounds' losses are visible at time $t$ is provided by a directed ``feedback graph'' known to the learner). We present a novel learning algorithm for this setting based on a strategy of partitioning the losses across sub-cliques of this graph. We complement this with a lower bound that is tight in many practical settings, and which we conjecture to be within a constant factor of optimal. For the important class of transitive feedback graphs, we prove that this algorithm is efficiently implementable and obtains the optimal regret bound (up to a universal constant). 
\end{abstract}

\section{Introduction}

Prediction with expert advice is one of the most fundamental problems in online learning. In its simplest form, a learner must choose from one of $K$ actions (possibly choosing a randomized mixture of actions) every round for $T$ rounds. An adversary then reveals a loss vector containing the loss for each action to the learner, the learner incurs their appropriate loss, and play proceeds to the next round. The goal of the learner in such settings is usually to minimize their regret: the gap between their total utility at the end of the game and the maximum utility they could have received if they played the best fixed action in hindsight. Notably, it is possible to construct algorithms for the learner which achieve regret sublinear in $T$ against any adversarially chosen sequence of losses.

Traditionally, a learner may use the entire history of losses up until round $t$ to decide their action at round $t$. In this paper, we investigate the question of what happens if we restrict the learner's action at time $t$ to depend on the losses in some subset of these rounds. Formally, we require the learner's (randomized) action at time $t$ to be a function of the losses in some subset of rounds $S_t$, for some fixed collection of subsets $\cS = \{S_t\}_{t=1}^{T}$ known to the algorithm a priori. We call this problem the problem of \emph{online learning with temporal feedback graphs}, in analogy with the use of feedback graphs to understand the value of partial information in online learning settings (e.g. \cite{mannor2011bandits}). 

In addition to being a mathematically natural extension of the classic problem of prediction with expert advice, this framework is general enough to model many problems of practical interest, including:

\begin{itemize}
    \item \textbf{Batched learning}: \sloppy{In batched learning, the time horizon $T$ is divided into ``batches'' of size $T_1, T_2, \dots, T_B$. Losses within a batch are only reported at the beginning of the next batch, so e.g. the action at a round $t$ in the $b$th batch must only depend on losses from the first $ b-1$ batches. }
    \item \textbf{Learning with delayed feedback}: In learning with delayed feedback, the loss at round $t$ is only reported to the learner at the end of round $t + \Delta$, after $\Delta$ rounds of delay (it is also possible to consider a round dependent delay $\Delta_t$, in which case this subsumes batched learning). 
    \item \textbf{Learning with bounded recall}: In learning with bounded recall, the learner is only allowed to use losses from the past $M$ rounds to decide their action. This captures the notion of bounded recall strategies for playing in repeated games introduced by \cite{aumann1989cooperation}.
    \item \textbf{``Learning from the future''}: Finally, nothing requires us to impose the constraint that $S_t$ only contains rounds before $t$: if the adversary fixes their sequence of loss vectors at the beginning of the game, then we can let the learner play a function of the losses in $S_t$ for any subset $S_t$ of $\{1, 2, \dots, T\}$. More practically, this can be used to model adversarial variants of prediction tasks used in the training of large language models: for example, in the task of ``masked language modelling'' (used in the training of BERT), the goal is to predict a masked token from a surrounding window of tokens \citep{devlin2018bert}. 
\end{itemize}

\subsection{Our results}

We investigate the problem of designing low-regret algorithms and proving regret lower bounds for the problem of online learning with temporal feedback graphs. 

\paragraph{Algorithms (Section \ref{sec:algs}).} On the topic of algorithms, we first remark that the seemingly natural algorithm of simply approximately best-responding to the set of losses you can see can actually be very far from optimal. This follows immediately from a result of \cite{schneider2022history}, who show that such algorithms can incur linear regret in $T$ in bounded recall settings (even when the recall window $M$ is large enough to obtain $O(\sqrt{T})$ regret by restarting every $M$ rounds). 

Instead, we propose a more sophisticated algorithm (Algorithm \ref{alg:ub_alg}) based on constructing a fractional decomposition of the directed feedback graph $\cS$ into ordered cliques, which we call ``orders'' for short. The key observation here is that the temporal feedback graph corresponding to the classic online learning setting (where you can observe all rounds in the past) consists of a single order of size $T$. Therefore, by partitioning the loss vector among the orders that are subgraphs of $\cS$, we can reuse the regret guarantees we have from the standard prediction with experts problem and get strong regret bounds for our algorithm.

We analyze the regret of this algorithm and show it is at most $O(\UB(\cS)\sqrt{\log K})$, where $\UB(\cS)$ is the optimal value of a specific convex program that we call the ``upper bound program'' (Theorem \ref{thm:ub_main}). Unfortunately, the size of this convex program (and in particular, the number of variables) is proportional to the number of maximal orders in $\cS$, which can be small for some graphs but in general is exponentially large in $T$. Moreover, actually running Algorithm \ref{alg:ub_alg} requires a feasible solution $\blambda$ to the upper bound program as input, and takes time proportional to the sparsity of $\blambda$. 

Luckily, by examining the dual convex program, we show that there exists an optimal solution $\blambda^*$ to the upper bound program supported on at most $T$ distinct orders (Lemma \ref{lem:basis}). This implies that, for a fixed $\cS$, there always exists an efficiently implementable learning algorithm achieving the optimum regret bound stated above (Corollary \ref{cor:efficient}), even if it may be hard to actually find this solution and construct this algorithm. 

\paragraph{Lower bounds (Section \ref{sec:lbs}).} We present two different methods for obtaining worst-case regret lower bounds for a fixed feedback graph $\cS$ (in the binary action case $K = 2$). Similarly as with $\UB(\cS)$, in both methods the lower bound is obtained by solving a convex program defined by the structure of $\cS$.

We obtain the first lower bound by extending the classic stochastic lower bound for prediction with expert advice, with the key difference that instead of sampling losses in an iid fashion, we let the loss in round $t$ have bias of magnitude $\eps_t$ for some set of $\eps_t \geq 0$. This results in a lower bound $\LB(\cS)$ on the regret of any algorithm, where $\LB(\cS)$ is given by the value of a polynomial-sized (and hence, efficiently solvable) convex program in the variables $\eps_t$ (Theorem \ref{thm:lbone}). Interestingly, the lower bound program is very similar to the dual of the upper bound program, which lets us immediately conclude that the upper bound $\UB(\cS)$ is near optimal for a wide variety of feedback graphs. In particular, we show that if the in-neighborhood $S_t$ of each round can be covered by at most $R$ orders of $\cS$, then $\UB(\cS) / \LB(\cS)$ is at most $O(\sqrt{R})$ (Theorem \ref{thm:path-cover}).

Unfortunately, there are many cases where the gap between $\LB(\cS)$ and $\UB(\cS)$ can be quite large (even polynomial in $T$). Inspired by this, we introduce a second lower bound which further generalizes the existing stochastic lower bound by allowing correlations between the losses in different rounds. In particular, we construct an adversary who, instead of sampling losses independently for each round, samples a random variable for each independent set of $\cS$, and builds the loss for round $t$ out of the random variables of all the independent sets containing $t$. By doing so, we introduce another convex program we call the \emph{independent set program}. Although this program is large and hard to solve in general (with number of variables equal to the number of independent sets in $\cS$), we prove that its value $\ILB(\cS)$ is a lower bound on the regret of any algorithm (Theorem \ref{thm:indepsets}) and conjecture that $\ILB(\cS)$ is within a constant factor of $\UB(\cS)$ (thus implying both this bound and our algorithm are within a constant factor of optimal).

\paragraph{Transitive feedback graphs (Section \ref{sec:transitive}).} Finally, we consider the interesting subclass of \emph{transitive feedback graphs}. These are feedback graphs $\cS$ where if $s \in S_t$ and $r \in S_s$, then $r \in S_t$; that is, if the learner can see loss $\ell_{s}$ at round $t$, they can also see all the losses $\ell_{r}$ that were visible at round $s$. Transitive feedback graphs that are a natural class of graphs that include many of our aforementioned applications, such as batched learning and learning with delayed feedback (both the round-dependent delay and round-independent delay case).

For transitive feedback graphs $\cS$, we show that we can indeed efficiently construct a sparse solution of the upper bound program, and hence efficiently construct and implement Algorithm \ref{alg:ub_alg} given $\cS$ (Theorem \ref{thm:transitive-main} in Section \ref{sec:transitive}). Doing so requires two technical insights: i. first, we show that we can construct an efficient separation oracle for the dual of the upper bound program via dynamic programming, ii. second, we show how we can use a solution $\bmu^*$ to the dual problem to reduce the problem of constructing a sparse solution of the upper bound problem to a flow-decomposition problem, which we can solve with standard techniques.

Finally, we show that for any transitive graph $\cS$ our upper bound $\UB(\cS)$ is within a constant factor of optimal (Theorem \ref{thm:bmlb} in Section \ref{sec:transitive}). To do so, we show how the adversary can take any feasible solution to the upper bound dual program and construct a correlated set of losses by observing a one-dimensional Brownian motion at various points in time. Specifically, the adversary associates each round $t$ with an interval $[p_t, q_t]$ of time, and sets the loss at round $t$ based on the sign of $L(q_t) - L(p_t)$ for a biased Brownian motion $L(\tau)$. By doing so, we effectively construct a feasible solution to our independent set program (which we can show has value within a constant factor of that of the value of our initial feasible solution). 

\subsection{Related work}

Since the seminal work of \cite{mannor2011bandits} introducing the problem of learning under partial information with feedback graphs, there has been an extensive body of literature extending and refining these results, including \citep{cortes2018online, cortes2019online, cortes2020online, balseiro2019contextual, alon2015online, cohen2016online, erez2021towards}. Although we take the name ``feedback graph'' from this line of work, the similarities between this line of work and the problem we study seem relatively superficial, limited mostly to the fact that both problems are parameterized by a directed graph and have regret bounds that depend on graph-theoretic properties of said graph. It would be interesting to show a stronger connection between these two questions.

Batched feedback has been studied fairly extensively in the online learning community, largely in the contexts of bandits and stochastic rewards \cite{perchet2016batched, gao2019batched, esfandiari2021regret}. The problem of learning with delayed feedback was first studied in the full-information setting by \cite{weinberger2002delayed}, and has since been studied in a variety of other learning settings \citep{mesterharm2005line, agarwal2011distributed, joulani2013online, quanrud2015online}. In particular, \cite{quanrud2015online} study the delayed feedback problem with adversarial delays -- our framework allows us to immediately recover optimal bounds for this setting given knowledge of the delays. Studying the behavior of agents with bounded recall is an active area in economics (see \cite{neyman1997cooperation} for a survey) which has recently been studied in the online learning setting by \cite{schneider2022history}. Finally, many sequence models in machine learning are constrained (often for efficiency / training reasons) so that the $t$th element of their output can only be based off of some pre-determined subset of the inputs. Often this subset is given by a context window (as in the bounded-recall setting), but it also can be structured in other interesting ways, as explored by \citep{beltagy2020longformer}.

\section{Model and Preliminaries}

\subsection{Online learning preliminaries}

We begin with an overview of the classic problem of prediction with expert advice. This can be viewed as a repeated game that takes place over $T$ rounds, where in each round the learner selects an action $x_{t} \in \cX$ and the adversary selects a loss $\ell_t \in \cL$. Here $\cX$ and $\cL$ refer to the action and loss set respectively; we will generally consider the setting $\cX = \Delta_{K}$ and $\cL = [0, 1]^K$ unless otherwise specified.

The learner selects their actions in accordance with some learning algorithm. Formally, a \emph{learning algorithm} $\cA$ is a collection of $t$ functions $A_t : \cL^{t-1} \rightarrow \cX$, with $A_t$ describing the action the learner takes at time $t$ as a function of the losses $\ell_1, \ell_2, \dots, \ell_{t-1}$. The regret of an algorithm $\cA$ on a sequence of losses $\bell = (\ell_1, \ell_2, \dots, \ell_T)$ is given by

\begin{equation}\label{eq:regret_def}
\Reg(\cA, \bell) = \sum_{t = 1}^{T} \langle x_t, \ell_t \rangle - \min_{x^* \in \cX} \sum_{t=1}^{T} \langle x^*, \ell_t\rangle,
\end{equation}

\noindent
where in \eqref{eq:regret_def}, $x_t = A_t(\ell_1, \ell_2, \dots, \ell_{t-1})$. We are often concerned with the worst-case regret of a learning algorithm, which we denote via $\Reg(\cA) = \max_{\bell \in \cL^{T}} \Reg(\cA, \bell)$.

One algorithm with asymptotically optimal regret for the above problem is the Hedge algorithm of \cite{freund1997decision}. The Hedge algorithm with learning rate $\eta > 0$ can be defined via:

\begin{equation}\label{eq:hedge}
A_t(\ell_1, \ell_2, \dots, \ell_{t-1})_i = \frac{\exp\left(-\eta\sum_{s=1}^{t-1}\ell_{s, i}\right)}{\sum_{j=1}^{K}\exp\left(-\eta\sum_{s=1}^{t-1}\ell_{s, j}\right)}
\end{equation}

It is possible to show that the worst-case regret of Hedge is at most $O(\sqrt{T \log K})$. We will need the following slightly finer-grained bound on the regret of Hedge.

\begin{lemma}\label{lem:loss-dependent-regret}
Let $\bell = (\ell_1, \ell_2, \dots, \ell_T)$ be a sequence of losses such that each $\ell_t \in [0, \lambda_t]^K$. If we let $\cA$ be the Hedge algorithm initialized with learning rate $\eta = O\left(\sqrt{(\log K)/\sum_{t=1}^T \lambda_t^2}\right)$, then

$$\Reg(\cA, \bell) \leq 2\sqrt{\left(\sum_{t=1}^T \lambda_t^2\right)\log K}.$$
\end{lemma}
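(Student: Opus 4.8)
The plan is to run the standard potential-function analysis of the exponential weights (Hedge) algorithm, but carrying the per-round range $\lambda_t$ through the computation instead of assuming a uniform bound on the losses. Write $S = \sum_{t=1}^{T}\lambda_t^2$ and take the learning rate to be $\eta = \sqrt{(8\log K)/S}$ (this fixes the constant hidden inside the $O(\cdot)$). For $t = 0, 1, \dots, T$ define the cumulative losses $L_{t,i} = \sum_{s \le t}\ell_{s,i}$ and the log-partition potential $\Phi_t = \log\big(\sum_{i=1}^{K}\exp(-\eta L_{t,i})\big)$, so that $\Phi_0 = \log K$.

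The first step is to bound the one-step change $\Phi_t - \Phi_{t-1}$. Unwinding the definition of the Hedge weights in \eqref{eq:hedge}, one checks that $\Phi_t - \Phi_{t-1} = \log \E_{i \sim x_t}\big[\exp(-\eta \ell_{t,i})\big]$, where $x_t = A_t(\ell_1,\dots,\ell_{t-1})$ is the distribution played at round $t$. Since every coordinate $\ell_{t,i}$ lies in $[0,\lambda_t]$, Hoeffding's lemma applied to the random variable $\ell_{t,i}$ with $i \sim x_t$ gives $\log \E_{i\sim x_t}\big[\exp(-\eta\ell_{t,i})\big] \le -\eta\langle x_t,\ell_t\rangle + \eta^2\lambda_t^2/8$. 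Summing over $t = 1,\dots,T$ telescopes the left-hand side to $\Phi_T - \Phi_0$ and yields $\Phi_T - \log K \le -\eta\sum_{t}\langle x_t,\ell_t\rangle + \tfrac{\eta^2}{8} S$.

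The second step is to lower bound $\Phi_T$ by a single expert: $\Phi_T \ge \log\max_i \exp(-\eta L_{T,i}) = -\eta\min_i L_{T,i}$, and since the linear objective $\sum_t\langle x^*,\ell_t\rangle$ is minimized over $\cX = \Delta_K$ at a vertex, this equals $-\eta\min_{x^*\in\cX}\sum_t\langle x^*,\ell_t\rangle$. Plugging this in and rearranging gives $\Reg(\cA,\bell) \le \frac{\log K}{\eta} + \frac{\eta}{8}S$. Substituting $\eta = \sqrt{(8\log K)/S}$ makes both terms equal to $\sqrt{(S\log K)/8}$, so $\Reg(\cA,\bell) \le 2\sqrt{(S\log K)/8} = \sqrt{(S\log K)/2} \le 2\sqrt{S\log K}$, which is the claimed bound.

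There is no genuinely hard step here — this is the textbook exponential-weights argument. The only place that needs care is the per-round inequality: one has to invoke Hoeffding's lemma (or equivalently the elementary estimate $e^{-x} \le 1 - x + x^2/2$ for $x \ge 0$, rescaled from $[0,1]$ to $[0,\lambda_t]$) on the right random variable, so that the quadratic error term scales with $\lambda_t^2$ rather than with any global loss bound; the rest is telescoping and a one-variable optimization of $\eta$. Note also that the constant $2$ in the statement is deliberately loose, which is what lets the final estimate absorb whatever constant is chosen for $\eta$ inside the $O(\cdot)$.
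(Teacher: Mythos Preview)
Your argument is correct: this is the standard log-partition potential analysis of Hedge, with Hoeffding's lemma supplying the per-round $\eta^2\lambda_t^2/8$ term, and the optimization over $\eta$ is handled cleanly. The paper itself does not give a proof at all --- it simply cites Theorem~1.5 of \cite{hazan2016introduction} --- so your write-up is strictly more self-contained while following exactly the same underlying argument that the cited textbook uses.
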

\begin{proof}
    This follows directly from Theorem 1.5 in \cite{hazan2016introduction}.
\end{proof}

\subsection{Temporal feedback graphs}

We now describe the variant of learning from experts with temporal feedback graphs. A \emph{temporal feedback graph} $\cS$ is a collection of subsets $S_{t} \subseteq [T] \setminus \{t\}$ for each $t \in [T]$, where the set $S_{t}$ represents the set of losses visible to the learning algorithm at time $t$. Similar to our previous definition, an \emph{$\cS$-learning algorithm} $\cA$ is a collection of $t$ functions $A_{t} : \cL^{S_{t}} \rightarrow \cX$, with each $A_{t}$ describing the algorithm for mapping the visible losses to the action taken in round $t$. The regret $\Reg(\cA, \bell)$ of $\cA$ is defined identically as in \eqref{eq:regret_def}, with the only difference being that $x_t$ is now determined via $x_t = A_{t}(\ell_{S_{t}[1]}, \ell_{S_{t}[2]}, \dots, \ell_{S_{t}[|S_{t}|]})$ (where $(S_{t}[1], S_{t}[2], \dots, S_{t}[|S_{t}|])$ is some explicit enumeration of $S_t$). Our goal throughout this paper is to design efficient $\cS$-learning algorithms $\cA$ that minimize $\Reg(\cA)$.

In Section \ref{sec:transitive} we consider the special subclass of transitive feedback graphs $\cS$. We say a graph $\cS$ is \emph{transitive} if it is acyclic and has the property that if $s \in S_t$ and $r \in S_{s}$, then $r \in S_{t}$. That is, if the learner can see loss $\ell_{s}$ at round $t$, then they can also see all the losses they could see at round $s$. This class of graphs captures many natural applications (including the batched and delayed settings mentioned in the introduction).

\section{Algorithms}\label{sec:algs}

\subsection{A sub-optimal algorithm}

We begin our discussion of $\cS$-learning algorithms with a remark about a natural algorithm which turns out to be surprisingly sub-optimal. This algorithm simply does the following: at round $t$, play the strategy suggested by Hedge (with some learning rate $\eta$) when run on all losses visible at time $t$. That is, in round $t$, play the strategy defined via

$$A_t(\ell^{S_t})_i = \frac{\exp\left(-\eta\sum_{s\in S_t}\ell_{s, i}\right)}{\sum_{j=1}^{K}\exp\left(-\eta\sum_{s\in S_t}\ell_{s, j}\right)}$$

\noindent
for some choice of learning rate $\eta > 0$. One can also think of this strategy as approximately best responding to the average loss visible at time $t$. 

\cite{schneider2022history} show that in some bounded recall settings, this algorithm can incur regret that is linear in the time horizon $T$.

\begin{lemma}
Let $K = 2$, $M = T/10$ and let $\cS$ be the associated bounded recall feedback graph (where $S_{t} = \{t-M, t-M+1, \dots, t-1\}$). Then for any choice of learning rate $\eta$, the above algorithm $\cA$ has worst-case regret $\Reg(\cA) = \Omega(T)$. In contrast, there exists an algorithm $\cA'$ with $\Reg(\cA') = O(\sqrt{T})$.
\end{lemma}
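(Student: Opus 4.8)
The plan is to prove the two claims in the lemma separately: the $\Omega(T)$ lower bound for the approximate-best-response algorithm $\cA$, and the $O(\sqrt{T})$ upper bound achievable by some other algorithm $\cA'$.

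\textbf{The upper bound $\Reg(\cA') = O(\sqrt T)$.} This is the easy direction and I would do it first. The idea is to restart Hedge every $M$ rounds: partition $[T]$ into $\lceil T/M \rceil = O(1)$ consecutive blocks of length $M$, and within each block run a fresh copy of Hedge using only the losses seen so far \emph{within that block}. Since $S_t$ is the window of the previous $M$ rounds, at round $t$ inside a block the learner can indeed see all earlier rounds of that block, so this is a valid $\cS$-learning algorithm. Each block contributes regret $O(\sqrt{M \log K}) = O(\sqrt M)$ by the standard Hedge bound (Lemma~\ref{lem:loss-dependent-regret} with $\lambda_t = 1$), and summing over the $O(1)$ blocks gives $\Reg(\cA') = O(\sqrt{M \cdot (T/M)^{?}})$ — more carefully, $(T/M)$ blocks each contributing $O(\sqrt M)$ gives $O(\sqrt{TM})$; since $M = T/10$ this is $O(T)$, which is \emph{not} good enough. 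Instead I should restart more frequently: run Hedge in blocks of length $\sqrt T$ rather than length $M$. Because $\sqrt T \le M$ for large $T$, each such block still fits inside every relevant window $S_t$ (the learner can see the whole current block), so this remains a valid $\cS$-algorithm, and now we get $(T/\sqrt T) = \sqrt T$ blocks each contributing $O(\sqrt{\sqrt T \log K})$, for a total of $O(\sqrt T \cdot T^{1/4})$ — still not $O(\sqrt T)$. The genuinely correct choice is blocks of length $\Theta(M) = \Theta(T)$: with $c$ blocks of length $T/c$ we get regret $c \cdot O(\sqrt{(T/c)\log K}) = O(\sqrt{cT \log K})$, and taking $c = 10$ (so block length $M$) gives $O(\sqrt T)$. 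So the right statement is simply: split into $10$ blocks of length $T/10 = M$, run fresh Hedge in each block, total regret $O(\sqrt{10 \, T \log K}) = O(\sqrt T)$ since $K = 2$. I would present this cleanly as the construction of $\cA'$.

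\textbf{The lower bound $\Reg(\cA) = \Omega(T)$.} This is the substantive half and I expect it to be the main obstacle. The approximate-best-response algorithm at round $t$ plays (a softmax of) the average of the $M$ most recent losses. With $K = 2$ write the loss vector at round $t$ as $(\ell_t, 1 - \ell_t)$ or more flexibly just track the difference $d_t = \ell_{t,1} - \ell_{t,2} \in [-1,1]$; the algorithm's weight on action $1$ is a monotone function of $-\sum_{s \in S_t} d_s$, i.e.\ it chases whichever action has been better over the last $M$ rounds. The adversary's strategy is to exploit the $M$-round lag: it runs the game in alternating phases of length $M$ (or some length $\Theta(M)$), in odd phases making action $1$ much worse and in even phases making action $2$ much worse, but timed so that at every round the learner's window $S_t$ is dominated by the \emph{previous} phase, hence the learner is always betting on the action that is currently being punished. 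Concretely, for the first $M$ rounds both actions are neutral (this ``primes'' the window); then on rounds in phase $k$ the better action flips, and because the window of length exactly $M$ always looks back into the prior phase, the sign of $\sum_{s \in S_t} d_s$ is the opposite of what it should be for round $t$. One must check that a single fixed action (say, whichever action is favored in exactly half the phases, or rather: because the window always lags by a full phase, the learner's instantaneous loss is $\Omega(1)$ worse than both the current-best action's loss) — the cleanest framing is to show $\sum_t \langle x_t, \ell_t\rangle \ge \frac{T}{2} + \Omega(T)$ while $\min_{x^*}\sum_t \langle x^*, \ell_t \rangle \le \frac T2 + o(T)$, e.g.\ by having the adversary randomize the ``direction'' of each phase so that in expectation the best fixed action does only slightly better than $T/2$ but the learner, always one phase behind, does $\Omega(T)$ worse. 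The key technical point, and the one requiring care, is that this argument must hold \emph{for every learning rate $\eta$ simultaneously}: when $\eta$ is tiny the algorithm plays near-uniform and is ``safe'' within a phase but then I need the adversary's per-phase loss gap to itself be large (use losses near $0$ vs near $1$, gap $\Omega(1)$, so even a near-uniform player loses $\Omega(1)$ per round relative to the phase-optimal action — wait, near-uniform loses only relative to the gap, which is fine since the gap is $\Omega(1)$); when $\eta$ is large the algorithm commits hard to the lagging action and loses $\Omega(1)$ per round directly. So the adversary construction is a single sequence (or distribution over sequences) that is simultaneously bad for all $\eta$; I would make the phase length and the loss magnitudes absolute constants (phase length $M$, losses in $\{1/4, 3/4\}$ say) and verify the $\Omega(T)$ bound holds uniformly in $\eta$ by casing on whether $\eta M$ is small or large. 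The main obstacle is getting this uniformity in $\eta$ right and ruling out the degenerate case where the learner's softmax happens to be well-hedged — which is handled precisely because the phase loss gap is a constant, so being ``well-hedged'' between two actions whose losses differ by $\Omega(1)$ still incurs $\Omega(1)$ instantaneous regret against the phase. I would also note this is essentially the construction of \cite{schneider2022history} and could cite it, but I would include the self-contained phase-based argument for completeness.
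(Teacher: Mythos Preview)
The paper does not prove this lemma from scratch: its entire proof is ``See Theorem 2 of \cite{schneider2022history}. The sublinear regret algorithm $\cA'$ simply restarts Hedge every $M$ rounds.'' So your option of simply citing \cite{schneider2022history} is exactly what the paper does.

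Your upper bound construction is correct and matches the paper's one-line remark: split $[T]$ into $T/M = 10$ blocks and run fresh Hedge on each. However, your arithmetic detour is erroneous and unnecessary: $(T/M)$ blocks each contributing $O(\sqrt{M})$ gives $(T/M)\sqrt{M} = T/\sqrt{M} = O(\sqrt{T})$ directly, not $O(\sqrt{TM})$ as you wrote. Your first attempt was already right.

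Your self-contained lower bound sketch has a genuine gap. The symmetric alternating-phase construction you describe (phases of length $M$ with losses flipping between $(0,1)$ and $(1,0)$) does \emph{not} force $\Omega(T)$ regret on window-Hedge for any $\eta$. Concretely: at position $j$ of phase $k\ge 2$, the window contains $M-j+1$ rounds of phase $k-1$ and $j-1$ rounds of phase $k$, so the algorithm's per-round loss is $\sigma(\eta(M+2-2j))$. Summing over $j=1,\dots,M$ gives $M/2 + O(1)$ for every $\eta$ (the sigmoid terms pair up). Over the nine later phases the algorithm incurs $9M/2 + O(1)$, while either fixed action incurs $5M$; the regret is $-M/2 + O(1)$, i.e.\ \emph{negative}. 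The same cancellation occurs for other phase lengths, and your handling of small $\eta$ conflates regret against the phase-optimal action with regret against the best fixed action---for small $\eta$ the near-uniform player matches the best fixed action exactly on any symmetric sequence. The construction in \cite{schneider2022history} is more delicate than ``alternate and exploit the lag''; if you want a self-contained proof you need to break this symmetry, and the sketch as written does not.
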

\begin{proof}
See Theorem 2 of \cite{schneider2022history}. The sublinear regret algorithm $\cA'$ simply restarts Hedge every $M$ rounds.
\end{proof}

\subsection{A better algorithm}

In this section, we present an $\cS$-learning algorithm that does not have this detrimental trait, and indeed that we conjecture obtains within a constant factor of the optimal regret bound. As in the previous section, we will also use Hedge as a building block to construct this algorithm -- however, we will have to pay much more attention to the structural properties of the graph $\cS$. 

For a temporal feedback graph $\cS$, we say a sequence of rounds $t_1, t_2, \dots, t_w \in [T]$ forms an \emph{order} if for all $u < v$, $t_{u} \in S_{t_v}$. That is, every node later in the order can see the losses of all nodes earlier in the order. An order $C$ is \emph{maximal} if no super-sequence of $C$ forms an order. We will let $\cC = \{C_1, C_2, \dots, C_N\}$ denote the collection of all maximal orders in $\cS$.

Note that in the classic online learning problem, the entire temporal feedback graph $\cS$ is just a maximal order of size $T$. We can therefore think of Hedge (or any standard learning algorithm) as being a learning algorithm tuned specifically for feedback graphs that are orders. Inspired by this, we introduce an algorithm for the general case based on the idea of running multiplicative weights in parallel for every order in the graph and taking an optimal convex combination of these sub-algorithms.

To define this optimal convex combination (and characterize the resulting regret bound we get), we need to solve the following convex program. We will call this program the \emph{upper bound (primal) program}.

\begin{align}
\text{Minimize} \quad & \sum_{c=1}^{N} \sqrt{\sum_{t \in C_c} \lambda_{c, t}^2} \label{eq:ub_lp} \\
\text{Subject to} \quad &\sum_{c=1}^{N} \lambda_{c, t} = 1 \quad \text{ for all } t \in [T] \notag \\
& \lambda_{c, t} = 0 \quad \text{ if } t \not\in C_c \notag \\
& \lambda_{c, t} \geq 0 \quad \text{ for all } t \in [T], c \in [N]\notag
\end{align}

Given a set of variables $\lambda_{c, t}$ satisfying the upper bound program \eqref{eq:ub_lp}, we can define our $\cS$-learning algorithm as in Algorithm \ref{alg:ub_alg}. Intuitively, Algorithm \ref{alg:ub_alg} instantiates an instance of Hedge for every order in $\cC$. In round $t$, the algorithm partitions the loss of that round among the orders that pass through $t$, where the algorithm corresponding to the order $C_c$ receives a $\lambda_{c, t}$ fraction of the loss. 



    

\begin{algorithm}
\caption{Algorithm for solving online learning with temporal feedback graphs} \label{alg:ub_alg}
\begin{algorithmic}[1]
\STATE \textbf{Input: } A temporal feedback graph $\cS$ with time horizon $T$ and $K$ actions, and a feasible solution $\lambda_{c, t} \geq 0$ to the convex program \eqref{eq:ub_lp}.
\STATE For each $c \in [N]$, set $\eta_c = \sqrt{(\sum_{t \in C_c} \lambda_{c, t}^2) \log K}$.
\FOR{each round $t \in [T]$}
  \FOR{each order $C_c$ containing $t$}
    \STATE Let $t_1, t_2, \dots, t_w = t$ be the prefix of $C_c$ up to and including $t$. 
    \STATE Let $x_{t}^{(c)} \in \Delta_K$ be the strategy defined via (for any $i \in [K]$):

    \begin{equation}
        x_{t, i}^{(c)} = \frac{\exp\left(-\eta_c\sum_{u=1}^{w-1}\lambda_{c, t_{u}}\ell_{t_{u}, i}\right)}{\sum_{j=1}^{K}\exp\left(-\eta_c\sum_{u=1}^{w-1}\lambda_{c, t_{u}}\ell_{t_{u}, j}\right)}.
    \end{equation}
  \ENDFOR
  \STATE Play $x_t = \sum_{c ; t \in C_c} \lambda_{c, t}x_t^{(c)}$.
  \STATE Receive loss vector $\ell_t$ (and loss $\langle x_t, \ell_t\rangle$).
\ENDFOR
\end{algorithmic}
\end{algorithm}

Note that Algorithm \ref{alg:ub_alg} is a valid $\cS$-learning algorithm, since the action taken at time $t$ only depends on loss vectors $\ell_{t_{u}}$ for $t_{u} \in S_{t}$. Let $\UB(\cS)$ denote the optimal value of the upper bound program \eqref{eq:ub_lp}. The following theorem bounds the regret of Algorithm \ref{alg:ub_alg}.

\begin{theorem}\label{thm:ub_main}
If Algorithm \ref{alg:ub_alg} is run with an optimal solution $\blambda^*$ to the upper bound program \eqref{eq:ub_lp}, it incurs at most $O(\UB(\cS)\sqrt{\log K})$ regret.
\end{theorem}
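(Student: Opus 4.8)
The plan is to decompose the total regret of Algorithm~\ref{alg:ub_alg} across the orders $C_c$, control each piece using the loss-dependent regret bound for Hedge (Lemma~\ref{lem:loss-dependent-regret}), and then sum up. The key structural fact is that for each round $t$, the strategy $x_t = \sum_{c : t \in C_c} \lambda_{c,t} x_t^{(c)}$ is a convex combination (since $\sum_c \lambda_{c,t} = 1$ by feasibility), and the loss incurred at round $t$ is $\langle x_t, \ell_t \rangle = \sum_{c : t \in C_c} \lambda_{c,t}\langle x_t^{(c)}, \ell_t\rangle = \sum_{c : t \in C_c} \langle x_t^{(c)}, \lambda_{c,t}\ell_t\rangle$. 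So if I define, for each order $C_c$ with elements $t_1, \dots, t_{w_c}$, the scaled loss sequence $\hat\ell^{(c)}_u = \lambda_{c, t_u} \ell_{t_u}$, then the total loss of the algorithm is exactly $\sum_{c=1}^N \sum_{u=1}^{w_c} \langle x_{t_u}^{(c)}, \hat\ell^{(c)}_u\rangle$, i.e.\ the sum over orders of the losses incurred by the $c$th copy of Hedge on the scaled sequence $\hat\ell^{(c)}$.

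Next I would relate the comparator term. For any fixed action $x^* \in \cX$, $\sum_{t=1}^T \langle x^*, \ell_t\rangle = \sum_{t=1}^T \sum_{c : t\in C_c}\lambda_{c,t}\langle x^*, \ell_t\rangle = \sum_{c=1}^N \sum_{u=1}^{w_c}\langle x^*, \hat\ell^{(c)}_u\rangle$, again using $\sum_c \lambda_{c,t} = 1$. Therefore
\begin{equation}
\Reg(\cA, \bell) = \sum_{c=1}^N \left( \sum_{u=1}^{w_c} \langle x_{t_u}^{(c)}, \hat\ell^{(c)}_u\rangle - \sum_{u=1}^{w_c}\langle x^*, \hat\ell^{(c)}_u\rangle\right) \le \sum_{c=1}^N \Reg(\cA_c, \hat\ell^{(c)}),
\end{equation}
where $\cA_c$ is the $c$th Hedge instance. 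Crucially, the per-round strategy $x_{t_u}^{(c)}$ defined in Algorithm~\ref{alg:ub_alg} is precisely Hedge with learning rate $\eta_c$ run on the sequence $\hat\ell^{(c)}$ (the exponent is $-\eta_c \sum_{u'<u}\lambda_{c,t_{u'}}\ell_{t_{u'},i} = -\eta_c\sum_{u'<u}\hat\ell^{(c)}_{u',i}$), so this is a legitimate application of a standard Hedge regret bound. Here I should note one subtlety: the comparator $x^*$ in the overall regret is a single fixed action, and the same $x^*$ serves as the comparator in every per-order regret term, so the inequality above goes through with a common $x^*$ (taking the max over $x^*$ at the end only helps).

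Finally I would bound each $\Reg(\cA_c, \hat\ell^{(c)})$. Since $\ell_{t_u} \in [0,1]^K$ and $\lambda_{c,t_u} \ge 0$, each scaled loss satisfies $\hat\ell^{(c)}_u \in [0, \lambda_{c,t_u}]^K$, so Lemma~\ref{lem:loss-dependent-regret} applies with $\lambda_t := \lambda_{c,t_u}$ and the chosen $\eta_c = \sqrt{(\sum_{t\in C_c}\lambda_{c,t}^2)\log K}$ (matching, up to the universal constant, the $\eta = O(\sqrt{(\log K)/\sum \lambda_t^2})$ required there), giving $\Reg(\cA_c, \hat\ell^{(c)}) \le 2\sqrt{(\sum_{t\in C_c}\lambda_{c,t}^2)\log K}$. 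Summing over $c$,
\begin{equation}
\Reg(\cA, \bell) \le 2\sqrt{\log K}\sum_{c=1}^N \sqrt{\sum_{t\in C_c}\lambda_{c,t}^2} = 2\sqrt{\log K}\cdot \UB(\cS)
\end{equation}
when $\blambda = \blambda^*$ is optimal, which is the claimed $O(\UB(\cS)\sqrt{\log K})$ bound. I don't anticipate a serious obstacle; the only thing to be careful about is the bookkeeping that identifies the algorithm's per-order play with Hedge on the scaled sequence and the verification that $\eta_c$ is tuned consistently with Lemma~\ref{lem:loss-dependent-regret} — essentially checking that the constant in the "$O$" of the learning rate is absorbed cleanly — and confirming that $\lambda_{c,t}=0$ for $t\notin C_c$ makes the restriction of the sums to $t\in C_c$ exact.
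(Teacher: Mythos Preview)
Your proposal is correct and follows essentially the same argument as the paper: decompose the per-round loss using the feasibility constraint $\sum_c \lambda_{c,t}=1$, swap the order of summation to express the total regret against a fixed $x^*$ as a sum over orders of the regrets of the per-order Hedge instances on the scaled loss sequences, and then apply Lemma~\ref{lem:loss-dependent-regret} to each instance. The only minor point is the one you already flagged about the learning-rate normalization in $\eta_c$, which the paper glosses over as well; otherwise your bookkeeping matches the paper's proof line for line.
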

\begin{proof}
Fix an action $x^* \in [K]$, and consider the regret of this algorithm (which we will call $\cA$) against action $x^*$ for some fixed loss sequence $\bell$. We can write this regret in the form:

\begin{eqnarray}
\Reg_{x^*}(\cA, \bell) &\coloneqq& \sum_{t=1}^{T}\langle x_{t} - x^*, \ell_t\rangle\notag\\
&=& \sum_{t=1}^{T} \sum_{c=1}^{N} \left\langle \lambda^*_{c, t}\left(x_{t}^{(c)} - x^*\right), \ell_t\right\rangle \label{eq:alg_proof1}\\
&=& \sum_{c=1}^{N} \left(\sum_{t=1}^{T}\left\langle x_{t}^{(c)} - x^*, \lambda^*_{c, t}\ell_t\right\rangle\right). \label{eq:alg_proof2}
\end{eqnarray}

Here in \eqref{eq:alg_proof1} we have used the fact that $\sum_{c=1}^{N}\lambda^*_{c, t} = 1$ for any fixed $t \in [T]$ (since the $\lambda^*_{c, t}$ satisfy the convex program \eqref{eq:ub_lp}). Note that, by definition, $x_{t}^{(c)}$ is the output of an instance $\cA_p$ of the Hedge algorithm initialized with learning rate $\eta_c$ on losses $\bell^{(c)}$ given by $\ell^{(c)}_t = \lambda^*_{c, t}\ell_t$; moreover, the summand of the RHS of \eqref{eq:alg_proof2} corresponding to a given $c$ is at most $\Reg(\cA_c, \bell^{(c)})$. Therefore, by Lemma \ref{lem:loss-dependent-regret}, we have that

$$\Reg_{x^*}(\cA, \bell) \leq \sum_{c=1}^{N} 2\sqrt{\left(\sum_{t=1}^T (\lambda^*_{c, t})^2\right)\log K} = 2\sqrt{\log K} \cdot \UB(\cS).$$

\noindent
Since $\Reg(\cA, \bell) = \max_{x^* \in \Delta_K} \Reg_{x^*}(\cA, \bell)$, the conclusion follows.
\end{proof}

Theorem \ref{thm:ub_main} leads to two natural questions:

\begin{enumerate}
    \item  First, although we can solve the convex program \eqref{eq:ub_lp} in time polynomial in $N$, $T$, and $K$, one might notice that the parameter $N$ is equal to the number of (maximal) orders in the temporal feedback graph $\cS$ and can be very large (possibly exponential in $T$). In fact, even specifying a solution to the convex program and running Algorithm \ref{alg:ub_alg} requires time polynomial in $N$ as written. Are there efficient learning algorithms (running in polynomial time in $T$ and $K$) for this problem?

    \item Secondly, does Algorithm \ref{alg:ub_alg} obtain the optimal regret bound for the $\cS$-learning problem?  That is, must any $\cS$-learning algorithm $\cA$ incur at least $\Omega(\UB(\cS)\sqrt{\log K})$ regret on some sequence of losses?
\end{enumerate}

The remainder of this paper will largely be concerned with providing answers to both of these questions (especially for the specific case of transitive feedback graphs $\cS$). For both of these questions, we will find it useful to examine the dual of the upper bound convex program, which we explore in the next section.

\subsection{The dual convex program and an efficient learning algorithm}

For multiple reasons, we will find it easier to work with the dual of the upper bound convex program. In contrast with the original upper bound program, which is a minimization problem involving (up to) $T\cdot N$ variables with $T$ constraints, the dual program is a maximization problem on $T$ variables with $N$ constraints. We state this program (which we call the \emph{upper bound dual program}) below. 

\begin{align}
\text{Maximize} \quad & \sum_{t=1}^{T} \mu_t \label{eq:ub_dual_lp} \\
\text{Subject to} \quad &\sum_{t \in C_c} \mu_t^2 \leq 1 \quad \text{ for all } c \in [N] \notag \\
& \mu_t \geq 0 \quad \text{ for all } t \in [T]\notag
\end{align}

The following theorem proves that the program \eqref{eq:ub_dual_lp} is in fact the dual of the upper bound program.

\begin{theorem}\label{thm:dual}
The optimal value of the upper bound dual program for temporal feedback graph $\cS$ is equal to $\UB(\cS)$.\footnote{For the sake of brevity, we defer many of the longer or more standard proofs to Appendix \ref{sec:omitted-main}.}
\end{theorem}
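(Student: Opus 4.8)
The plan is to exhibit \eqref{eq:ub_dual_lp} as the Lagrangian dual of the primal program \eqref{eq:ub_lp} and then invoke strong duality. Write the primal objective as $\sum_{c=1}^N \|\lambda_c\|_2$, where $\lambda_c=(\lambda_{c,t})_{t\in C_c}$, introduce a multiplier $\mu_t\in\mathbb{R}$ for each equality constraint $\sum_c\lambda_{c,t}=1$, and keep $\lambda_{c,t}\ge 0$ and $\lambda_{c,t}=0$ for $t\notin C_c$ as the (affine) domain over which we minimize. First I would form the Lagrangian
$$L(\lambda,\mu)\;=\;\sum_{c=1}^N\|\lambda_c\|_2\;-\;\sum_{t=1}^T\mu_t\Big(\sum_{c=1}^N\lambda_{c,t}-1\Big)\;=\;\sum_{t=1}^T\mu_t\;+\;\sum_{c=1}^N\big(\|\lambda_c\|_2-\langle\mu_c,\lambda_c\rangle\big),$$
where $\mu_c=(\mu_t)_{t\in C_c}$.

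Next I would compute the dual function $g(\mu)=\inf_\lambda L(\lambda,\mu)$, which decouples across orders: for each $c$ one must evaluate $\inf_{\lambda_c\ge 0}\big(\|\lambda_c\|_2-\langle\mu_c,\lambda_c\rangle\big)$. Since $\lambda_c\ge 0$, we have $\langle\mu_c,\lambda_c\rangle\le\langle\mu_c^+,\lambda_c\rangle\le\|\mu_c^+\|_2\|\lambda_c\|_2$ by Cauchy--Schwarz, where $\mu_t^+=\max(\mu_t,0)$; hence this infimum equals $0$ (attained at $\lambda_c=0$) when $\sum_{t\in C_c}(\mu_t^+)^2\le 1$, and equals $-\infty$ otherwise (take $\lambda_c=s\,\mu_c^+$ and send $s\to\infty$). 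Thus $g(\mu)=\sum_t\mu_t$ subject to $\sum_{t\in C_c}(\mu_t^+)^2\le 1$ for all $c$, and $g(\mu)=-\infty$ otherwise. Maximizing $g$, we may assume $\mu\ge 0$: clipping a negative $\mu_t$ up to $0$ increases $\sum_t\mu_t$ and leaves $\mu_t^+$ unchanged, preserving every constraint. This is exactly program \eqref{eq:ub_dual_lp}.

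It then remains to show there is no duality gap. Weak duality already gives that the value of \eqref{eq:ub_dual_lp} is at most $\UB(\cS)$. For the reverse inequality I would use that all constraints of \eqref{eq:ub_lp} are affine, the objective is finite and continuous on the whole ambient space, and the program is feasible --- every round $t$ lies in some maximal order (the length-one sequence $(t)$ vacuously forms an order and hence extends to a maximal one), so setting $\lambda_{c,t}=1$ for one such $c$ is feasible --- so the standard refinement of Slater's condition for convex programs whose only constraints are affine applies. Alternatively, and avoiding any appeal to constraint qualifications, one can argue purely by minimax: using $\|\lambda_c\|_2=\max_{\|y_c\|_2\le 1}\langle y_c,\lambda_c\rangle$, rewrite $\UB(\cS)$ as the minimum over feasible $\lambda$ of the maximum over $y$ in a product of unit Euclidean balls of the \emph{bilinear} form $\sum_c\langle y_c,\lambda_c\rangle$; both feasible sets are compact and convex, so Sion's minimax theorem lets us exchange $\min$ and $\max$, the inner minimization decouples over $t$ into minimizing a linear function over a simplex (giving $\sum_t\min_{c:\,t\in C_c}y_{c,t}$), and clipping negative coordinates of the optimal $y$ to $0$ recovers a feasible point $\mu_t:=\min_{c:\,t\in C_c}y_{c,t}$ of \eqref{eq:ub_dual_lp} with the same value.

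The main obstacle is precisely this last point: establishing that the nonsmooth primal and its Lagrangian dual have equal value. Everything else --- forming the Lagrangian, the Cauchy--Schwarz estimate for the inner infimum, and the sign cleanup --- is routine; the one place requiring care is to observe that the primal's constraints are affine so that mere feasibility suffices for a zero duality gap, or else to carry out the self-contained minimax argument above in full.
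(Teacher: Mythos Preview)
Your proof is correct and follows essentially the same Lagrangian-duality route as the paper: form the Lagrangian, decouple the inner minimization over orders via Cauchy--Schwarz, and invoke strong duality using the affine-constraint qualification (the paper first relaxes $\sum_c\lambda_{c,t}=1$ to $\ge 1$ and works with $\mu_t\ge 0$, whereas you keep the equality with $\mu_t\in\mathbb{R}$ and clip at the end, but the two are equivalent). Your alternative self-contained Sion-minimax argument, which sidesteps any citation to a strong-duality theorem, is a nice addition not present in the paper.
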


Having established duality, complementary slackness allows us to infer strong structural statements about the optimal primal solution given an optimal dual solution.

\begin{lemma}\label{lem:complementary-slackness}
Let $\bmu^*$ be an optimal solution to the upper bound dual program \eqref{eq:ub_dual_lp}. Then there exists a primal solution $\blambda^*$ to \eqref{eq:ub_lp} such that, for any $c \in [N]$ where

\begin{equation}\label{eq:complementary-slackness}
\sum_{t \in C_c}(\mu^*_{t})^2 < 1,
\end{equation}

\noindent
we have that $\lambda^*_{c, t} = 0$ for all $t \in [T]$, and for any $c \in [N]$ where

\begin{equation}\label{eq:complementary-slackness2}
\sum_{t \in C_c}(\mu^*_{t})^2 = 1,
\end{equation}

\noindent
we have that $\lambda^{*}_{c, t} = \rho_c\mu^*_t$, for some constant $\rho_c$. 
\end{lemma}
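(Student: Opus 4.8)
The plan is to derive the claimed structure of $\blambda^*$ directly from KKT/complementary-slackness conditions, treating the (convex) upper bound primal \eqref{eq:ub_lp} and its dual \eqref{eq:ub_dual_lp} as a primal-dual pair for which strong duality (Theorem \ref{thm:dual}) holds. First I would set up the Lagrangian of \eqref{eq:ub_lp}: introduce multipliers $\mu_t$ for the equality constraints $\sum_c \lambda_{c,t}=1$ and (implicitly) handle the sign constraints $\lambda_{c,t}\ge 0$ and the support constraints $\lambda_{c,t}=0$ for $t\notin C_c$. The objective $\sum_c \sqrt{\sum_{t\in C_c}\lambda_{c,t}^2}$ is the sum of $\ell_2$-norms of the per-order vectors $\blambda_c := (\lambda_{c,t})_{t\in C_c}$, so stationarity will involve the subgradient of the Euclidean norm, namely $\blambda_c/\norm{\blambda_c}_2$ when $\blambda_c\neq 0$ and the unit ball when $\blambda_c=0$.

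The key steps, in order: (i) Invoke strong duality and the existence of optimal primal/dual solutions (both programs are feasible — the single-maximal-order decomposition or any fractional clique cover gives a feasible primal point — and the dual feasible region is compact, so optima are attained). Fix an optimal dual $\bmu^*$; by strong duality there is an optimal primal $\blambda^*$ forming a complementary pair with it. (ii) Write the KKT stationarity condition for each coordinate $(c,t)$ with $t\in C_c$: for orders $c$ with $\blambda^*_c\neq 0$, $\frac{\lambda^*_{c,t}}{\norm{\blambda^*_c}_2} = \mu^*_t - \nu_{c,t}$ where $\nu_{c,t}\ge 0$ is the multiplier for $\lambda^*_{c,t}\ge 0$, with complementary slackness $\nu_{c,t}\lambda^*_{c,t}=0$. (iii) Relate $\norm{\blambda^*_c}_2$ to the dual constraint. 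The dual constraint for order $c$ is $\sum_{t\in C_c}(\mu^*_t)^2 \le 1$; I would show that an order with $\sum_{t\in C_c}(\mu^*_t)^2 < 1$ must be ``inactive,'' i.e. $\lambda^*_{c,t}=0$ for all $t$, because otherwise one could perturb $\blambda^*$ to decrease the objective while maintaining feasibility — concretely, shift a small mass from $\blambda^*_c$ to another order through which the same rounds pass, exploiting that the marginal cost of mass in order $c$ (its normalized direction dotted against the $\mu^*$) is strictly below the ``price'' $\mu^*_t$ at those rounds. Equivalently, this is just complementary slackness between the dual variable ``$\sum_t \mu_t^2 \le 1$'' and its own slack read off from the Lagrangian of the dual. (iv) For the active orders, $\sum_{t\in C_c}(\mu^*_t)^2 = 1$; here I combine stationarity with the sign-complementarity to conclude $\lambda^*_{c,t} = \rho_c\mu^*_t$ for a single scalar $\rho_c = \norm{\blambda^*_c}_2 \ge 0$. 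The point is that on the support of $\blambda^*_c$ the stationarity reads $\lambda^*_{c,t}/\rho_c = \mu^*_t$ (the sign multiplier $\nu_{c,t}$ vanishes), and off the support $\lambda^*_{c,t}=0$; since $\sum_{t\in C_c}(\lambda^*_{c,t}/\rho_c)^2 = 1 = \sum_{t\in C_c}(\mu^*_t)^2$ and $\lambda^*_{c,t}/\rho_c \le \mu^*_t$ coordinatewise, equality of the squared sums forces $\lambda^*_{c,t}/\rho_c = \mu^*_t$ on all of $C_c$ (you cannot be coordinatewise $\le$ and have equal $\ell_2$-norm without being equal). This pins down the proportionality.

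The main obstacle I anticipate is handling the non-smoothness of the objective cleanly: the Euclidean-norm term $\sqrt{\sum_{t\in C_c}\lambda_{c,t}^2}$ is not differentiable at $\blambda_c=0$, so I cannot simply write KKT stationarity naively for those coordinates — I need the subdifferential version and must argue separately that orders with $\blambda^*_c=0$ can be taken to satisfy the ``$\lambda^*_{c,t}=0$'' conclusion vacuously (which is immediate) while ensuring that the ones with strict dual slack \eqref{eq:complementary-slackness} indeed fall in this case. A secondary subtlety is that complementary slackness as usually stated gives \emph{existence} of a matched primal-dual pair, not that \emph{every} optimal primal works with the given $\bmu^*$; the lemma only asserts existence of such a $\blambda^*$, so this is fine, but I should be careful to phrase the argument as ``construct/select $\blambda^*$ complementary to $\bmu^*$'' rather than ``every optimal $\blambda^*$ has this form.'' Modulo these points, everything else is a routine unpacking of KKT conditions for a conic-type program, and I would relegate the purely mechanical verifications (Slater-type feasibility, attainment of optima, the coordinatewise-$\le$-plus-equal-norm-implies-equal step) to a short paragraph or an appendix.
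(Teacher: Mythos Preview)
Your proposal is correct and takes essentially the same approach as the paper: both arguments exploit the saddle-point property of the Lagrangian $\cL(\blambda,\bmu)=\sum_c\|\blambda_c\|_2+\sum_t\mu_t(1-\sum_c\lambda_{c,t})$ at $(\blambda^*,\bmu^*)$ to force $\blambda^*_c=0$ when $\sum_{t\in C_c}(\mu^*_t)^2<1$ and proportionality $\lambda^*_{c,t}=\rho_c\mu^*_t$ when equality holds. The paper sidesteps your subgradient/KKT bookkeeping by arguing directly via Cauchy--Schwarz (zeroing out or rescaling $\blambda^*_c$ strictly decreases $\cL(\cdot,\bmu^*)$ unless the desired structure holds), which is exactly your step~(iv) in a slightly more compact form and also handles the non-smoothness at $\blambda_c=0$ without needing the subdifferential.
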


Given a solution $\bmu^*$ to the upper bound dual, let $\cC(\bmu^*) \subseteq [N]$ denote the set of orders $c$ where the equality \eqref{eq:complementary-slackness2} is tight. Lemma \ref{lem:complementary-slackness} implies that we only need to solve the upper bound program (and run Algorithm \ref{alg:ub_alg}) for orders $c \in \cC(\bmu^*)$. If $|\cC(\bmu^*)|$ is small, this can be far more efficient than naively running Algorithm \ref{alg:ub_alg}. In particular, since there are only $T$ variables in the dual program, we would naively expect only $T$ of the constraints to be binding, and therefore $|\cC(\bmu^*)|$ should generically equal $T$ (in which case we have an efficient solution).

Of course, for many temporal feedback graphs $\cS$, the dual program is not generic, and it can be the case that many (or all of) the constraints bind at optimality. The following lemma shows that, even in such cases, it is possible to find an optimal solution of the primal program that is supported on at most $T$ different orders.

\begin{lemma}\label{lem:basis}
Given any optimal solution $\bmu^*$ to the upper bound dual program, there exists a subset $\cB \subseteq \cC(\mu^*)$ with $|\cB| \leq T$ such that there exists a optimal solution $\blambda^*$ to the upper bound program with the property that $\lambda^*_{c, t} = 0$ if $C_c \not\in \cB$.
\end{lemma}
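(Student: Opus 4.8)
The plan is to start from the optimal primal solution $\blambda^*$ guaranteed by Lemma~\ref{lem:complementary-slackness}, which is supported only on orders in $\cC(\bmu^*)$ and satisfies $\lambda^*_{c,t} = \rho_c \mu^*_t$ for $c \in \cC(\bmu^*)$. The key observation is that, on the support $\cC(\bmu^*)$, the entire primal feasible set can be reparametrized by the single scalar per order $\rho_c \ge 0$: feasibility of $\blambda^*$ (the covering constraints $\sum_c \lambda^*_{c,t} = 1$ for each $t$) becomes, after substituting $\lambda^*_{c,t} = \rho_c \mu^*_t$, the linear system
$$
\sum_{c \,:\, t \in C_c,\, c \in \cC(\bmu^*)} \rho_c = \frac{1}{\mu^*_t} \qquad \text{for all } t \in [T] \text{ with } \mu^*_t > 0,
$$
together with $\rho_c \ge 0$. (Rounds $t$ with $\mu^*_t = 0$ impose no real constraint, since any feasible $\blambda^*$ of the special form must have $\lambda^*_{c,t} = 0$ there, and they do not contribute to the objective; I'd dispatch this edge case early.) Moreover, I would verify that \emph{every} solution $(\rho_c)_{c \in \cC(\bmu^*)}$ of this linear system automatically gives a primal-optimal $\blambda^*$: its objective value is $\sum_c \sqrt{\sum_{t \in C_c} \rho_c^2 (\mu^*_t)^2} = \sum_c \rho_c \sqrt{\sum_{t \in C_c}(\mu^*_t)^2} = \sum_{c \in \cC(\bmu^*)} \rho_c$ (using $\sum_{t\in C_c}(\mu^*_t)^2 = 1$ on $\cC(\bmu^*)$), and this equals $\sum_t \mu^*_t \sum_{c \ni t}\rho_c = \sum_t \mu^*_t \cdot (1/\mu^*_t) = \sum_t \mu^*_t = \UB(\cS)$ by Theorem~\ref{thm:dual} — so the objective is \emph{constant} over this polytope. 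Hence it suffices to find a \emph{sparse} nonnegative solution of the above linear system.

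Now I invoke the standard basic-feasible-solution (vertex) argument: the solution set $\{\rho \ge 0 : M\rho = b\}$, where $M$ is the $T \times |\cC(\bmu^*)|$ incidence matrix of rounds versus orders (restricted to rows with $\mu^*_t > 0$) and $b$ has entries $1/\mu^*_t$, is a nonempty polytope (nonempty because $\rho_c := \lambda^*_{c,t}/\mu^*_t$ from Lemma~\ref{lem:complementary-slackness} is a witness — one should check this is well-defined, i.e.\ $\lambda^*_{c,t}/\mu^*_t$ is the same for all $t \in C_c$ with $\mu^*_t>0$, which is exactly the content of the second conclusion of that lemma). A nonempty polytope defined by equality constraints and nonnegativity, if it has no line (true here since $\rho \ge 0$), has a vertex; a vertex $\rho$ has at most as many nonzero coordinates as there are equality constraints, i.e.\ at most $|\{t : \mu^*_t > 0\}| \le T$ nonzeros. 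Setting $\cB := \{C_c : \rho_c > 0\}$ and $\blambda^*_{c,t} := \rho_c \mu^*_t$ then gives a primal-optimal solution supported on at most $T$ orders, proving the lemma.

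The main obstacle is not conceptual but the care required at the interface with Lemma~\ref{lem:complementary-slackness}: I must confirm (i) that the primal solution it produces really is supported inside $\cC(\bmu^*)$ and has the proportional form so that the reparametrization by $(\rho_c)$ is exhaustive of the relevant feasible set, and (ii) that restricting attention to orders in $\cC(\bmu^*)$ loses no primal optimality — which again follows from complementary slackness (any primal-optimal $\blambda$ must put zero weight on non-tight orders). Everything else (objective is constant on the fiber; existence of a vertex of a pointed polyhedron; sparsity bound from counting tight constraints) is routine linear programming. One minor subtlety worth a sentence in the writeup: to extract a vertex one can, e.g., maximize an arbitrary generic linear functional over the polytope, or iteratively drive coordinates to zero along directions in the kernel of $M$ while staying nonnegative — the latter is the cleanest self-contained argument and directly yields the $\le T$ bound by a dimension count on the remaining support.
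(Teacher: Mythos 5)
Your proposal is correct and follows essentially the same route as the paper: invoke Lemma~\ref{lem:complementary-slackness} to reparametrize the optimal primal solutions as $\lambda^*_{c,t}=\rho_c\mu^*_t$ over $c\in\cC(\bmu^*)$, reduce to the linear system $\sum_{c\ni t}\rho_c = 1/\mu^*_t$, and take a vertex (basic feasible solution), which has at most $T$ nonzero coordinates. Your added observation that the objective is constant on this polytope (so any vertex works, not just an optimal one) is a correct, minor simplification of the paper's step of taking an \emph{optimal} extreme point of the LP \eqref{eq:ub_lp_basis}.
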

\begin{proof}
By Lemma \ref{lem:complementary-slackness}, we can restrict our attention to optimal solutions $\blambda^*$ where $\lambda^*_{c, t} = \rho_{c}\mu^*_t$ for $c \in \cC(\mu^*)$ (for some collection of values $\rho_{c} \geq 0$) and $\lambda^*_{c, t} = 0$ for $c \not\in \cC(\mu^*)$. Since $\sum_{t \in C_c} (\mu^*_t)^2 = 1$ for all $c \in \cC(\mu^*)$, we can rewrite the original upper bound primal program as the following linear program in the $\rho_c$:

\begin{align}
\text{Minimize} \quad & \sum_{c\in \cC(\mu^*)} \rho_c \label{eq:ub_lp_basis} \\
\text{Subject to} \quad &\sum_{ \substack{c \in \cC(\bmu^*) \\ \text{s.t. }t \in C_c}} \rho_c = 1/\mu_t \quad \text{ for all } t \in [T] \notag \\
& \rho_c \geq 0 \quad \text{ for all } c \in \cC(\bmu^*).\notag
\end{align}

But any extreme point of the linear program \eqref{eq:ub_lp_basis} will be the intersection of at least $|\cC(\bmu^*)|$ constraints, of which at most $T$ are not of the form $\rho_c = 0$. It follows that there is an optimal extreme point to this LP where at most $T$ of the $\rho_c$ are non-zero, and hence a $\blambda^*$ with the property we described.
\end{proof}

Given a feedback graph $\cS$, we call a subset $\cB$ of $\cC$ a \emph{basis} for $\cS$ if there exists an optimal solution to the upper bound program supported entirely on orders $c \in \cB$. Lemma \ref{lem:basis} shows that there always exists a basis of size at most $T$. Having an optimal solution with a small basis is valuable since it allows us to run Algorithm \ref{alg:ub_alg} more efficiently. 

\begin{lemma}\label{lem:sparse-runtime}
Let $\blambda$ be a feasible point for the upper bound program \eqref{eq:ub_lp} which is supported on a basis $\cB$ of size $|\cB| = B$. Then we can run Algorithm \ref{alg:ub_alg} on $\blambda$ in time $O(BK)$ per iteration ($O(BKT)$ overall).
\end{lemma}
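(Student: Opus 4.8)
The plan is to carefully bookkeep the per-round work done by Algorithm~\ref{alg:ub_alg} when the only orders with nonzero $\lambda_{c,t}$ are the $B$ orders in the basis $\cB$. First I would observe that the loop over orders $C_c$ containing $t$ (line 4) can be restricted to orders $c \in \cB$: for any order $c \notin \cB$ we have $\lambda_{c,t}=0$ for all $t$, so both the weights $x_t^{(c)}$ are never needed and the term $\lambda_{c,t} x_t^{(c)}$ contributes nothing to the convex combination $x_t = \sum_{c:\, t\in C_c} \lambda_{c,t} x_t^{(c)}$ played in line 8. Hence at every round $t$ we only iterate over the (at most $B$) orders of $\cB$ that pass through $t$.

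Next I would bound the cost of one such order at one round. The key point is that $x_t^{(c)}$ need not be recomputed from scratch each round: the exponential weights defining $x_t^{(c)}$ depend only on the running sums $\sum_{u=1}^{w-1} \eta_c \lambda_{c,t_u} \ell_{t_u, i}$ for $i \in [K]$, where $t_1,\dots,t_w=t$ is the prefix of $C_c$ up to $t$. We maintain, for each order $c \in \cB$, a vector $g^{(c)} \in \Rset^K$ of these accumulated weighted losses. When round $t$ arrives and $t \in C_c$, we read off $x_t^{(c)}$ from the current $g^{(c)}$ (a softmax over $K$ coordinates, $O(K)$ time), and after the loss $\ell_t$ is revealed we update $g^{(c)} \leftarrow g^{(c)} + \eta_c \lambda_{c,t} \ell_t$, again $O(K)$ time. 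The learning rates $\eta_c = \sqrt{(\sum_{t\in C_c}\lambda_{c,t}^2)\log K}$ for $c \in \cB$ are precomputed once, in total time $O(\sum_{c\in\cB}|C_c|) = O(BT)$ (or faster using the sparsity of $\blambda$), which is within the stated overall budget. Finally, forming $x_t = \sum_{c:\, t\in C_c} \lambda_{c,t} x_t^{(c)}$ takes $O(BK)$ time since at most $B$ orders pass through $t$, and computing the incurred loss $\langle x_t, \ell_t\rangle$ is $O(K)$. Summing, each round costs $O(BK)$, and over $T$ rounds the total is $O(BKT)$.

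The one genuinely technical point — and the main thing to get right — is the claim that each order can be processed incrementally in $O(K)$ amortized per round rather than $O(|C_c|\cdot K)$, i.e.\ that we are justified in maintaining the prefix sums $g^{(c)}$ across rounds rather than recomputing the prefix of $C_c$ at each $t$. This is valid precisely because the orders are fixed in advance (part of the input derived from $\cS$ and $\blambda$), so for each $c$ we know the sequence $C_c = (t_1, t_2, \dots)$ ahead of time and can process its elements in temporal order, appending the contribution of $t_u$ to $g^{(c)}$ exactly when round $t_u$ completes. (One should also note that the rounds $t_1 < t_2 < \dots$ of an order occur in increasing time order, since $t_u \in S_{t_v} \subseteq [T]\setminus\{t_v\}$ forces a consistent temporal ordering — or at worst one fixes the enumeration so this holds — guaranteeing $g^{(c)}$ always contains exactly the prefix $\{t_1,\dots,t_{w-1}\}$ when $x_t^{(c)}$ with $t=t_w$ is read.) With this observation in place the rest is the routine accounting above. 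I would present it as: (i) restrict all loops to $\cB$; (ii) describe the $g^{(c)}$ data structure and its $O(K)$ read/update; (iii) sum the per-round costs to $O(BK)$ and the total to $O(BKT)$, absorbing the one-time $O(BT)$ setup.
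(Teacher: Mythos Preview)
Your proposal is correct and follows the same approach as the paper: restrict attention to the $B$ orders in $\cB$, maintain one Hedge instance per such order, and observe that each instance costs $O(K)$ to read and update. The paper's own proof is a single sentence stating exactly this, so your argument is a strictly more detailed version of it; in particular, your explicit description of the running prefix sums $g^{(c)}$ and the accounting for forming $x_t$ and computing $\langle x_t, \ell_t\rangle$ are welcome elaborations that the paper omits.
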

\begin{proof}
We modify Algorithm \ref{alg:ub_alg} by maintaining one instance of Hedge for each of the $B$ non-zero orders (each of which takes $O(K)$ time to update per iteration). 
\end{proof}

Together with Lemma \ref{lem:basis}, this provides us with a partial answer to our first question.

\begin{corollary}\label{cor:efficient}
For any feedback graph $\cS$, there exists an $\cS$-learning algorithm which runs in time $O(KT)$ per iteration and incurs regret at most $O(\UB(\cS) \sqrt{\log K})$.
\end{corollary}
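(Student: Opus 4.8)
The plan is to simply chain together the three results already established: Theorem \ref{thm:ub_main}, Lemma \ref{lem:basis}, and Lemma \ref{lem:sparse-runtime}. The statement is purely existential --- it does not ask us to \emph{find} the algorithm efficiently, only to exhibit one --- so we are free to appeal to Lemma \ref{lem:basis}, which guarantees the existence of a good sparse solution without constructing it.

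First I would invoke Lemma \ref{lem:basis}: fix any optimal solution $\bmu^*$ to the upper bound dual program \eqref{eq:ub_dual_lp} (one exists since the feasible region is nonempty and compact), and let $\blambda^*$ be the optimal solution to the upper bound program \eqref{eq:ub_lp} whose support is contained in a basis $\cB \subseteq \cC(\bmu^*)$ with $|\cB| \leq T$, as produced by that lemma. Next, let $\cA$ be the instance of Algorithm \ref{alg:ub_alg} run on input $\blambda^*$. Since $\blambda^*$ is a feasible point of \eqref{eq:ub_lp} supported on the basis $\cB$ of size $B = |\cB| \leq T$, Lemma \ref{lem:sparse-runtime} shows we can implement $\cA$ in time $O(BK) = O(TK)$ per iteration. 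Finally, because $\blambda^*$ is an \emph{optimal} solution of \eqref{eq:ub_lp}, Theorem \ref{thm:ub_main} gives $\Reg(\cA) \leq O(\UB(\cS)\sqrt{\log K})$. Since $\cA$ is by construction a valid $\cS$-learning algorithm, this is exactly the algorithm claimed.

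There is essentially no obstacle here beyond bookkeeping: the only point worth checking is that the single object $\blambda^*$ supplied by Lemma \ref{lem:basis} simultaneously satisfies the hypotheses of Theorem \ref{thm:ub_main} (optimality) and of Lemma \ref{lem:sparse-runtime} (feasibility plus support on a basis of size at most $T$), which it does by construction. In the write-up I would stress that this corollary is \emph{non-constructive} --- it does not resolve the first question raised after Theorem \ref{thm:ub_main} of producing such an algorithm in $\poly(T, K)$ time, since finding $\bmu^*$ and the basis $\cB$ may be hard in general --- and that the genuinely efficient, constructive version of this statement is established only for transitive graphs in Section \ref{sec:transitive}.
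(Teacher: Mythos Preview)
Your proposal is correct and matches the paper's own argument, which simply notes that the corollary follows ``together with Lemma~\ref{lem:basis}'' from Lemma~\ref{lem:sparse-runtime} (with Theorem~\ref{thm:ub_main} implicit for the regret bound). Your added emphasis that the statement is non-constructive is exactly the caveat the paper raises immediately afterward.
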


The catch, of course, is that actually coming up with the learning algorithm of Corollary \ref{cor:efficient} involves computing an optimal basis $\cB$ and associated primal solution $\blambda^*$, which may not be computationally efficient. In Section \ref{sec:transitive}, we will see that for the important class of transitive feedback graphs, it is possible to compute this sparse primal solution efficiently. 




\section{Lower bounds for online learning with temporal feedback graphs}\label{sec:lbs}

We now turn our attention to the second of the two questions: is our regret bound of $O(\UB(\cS)\sqrt{\log K})$ asymptotically tight? Throughout this section we will focus on the two-action setting ($K = 2$) for clarity of exposition. We expect that all lower bounds we present should extend to the $K$-action setting (with an additional factor of $\sqrt{\log K}$ in the lower bound). 

\subsection{A (naive yet efficient) lower bound program}

We begin by examining what happens when we try to extend the original lower bound proof for the standard problem of learning with experts to the temporal feedback graph setting. 

At a high-level, the original lower bound proof proceeds as follows. First, the adversary uniformly samples a bit $B \in \{-1, 1\}$ unknown to the learner. The adversary then fixes an $\eps \in (0, 1/2)$ and generates a sequence of i.i.d. random variables $X_1, X_2, \dots, X_T$ where each $X_t \in \{0, 1\}$ is drawn independently from a Bernoulli $(\frac{1}{2} + B\eps)$ distribution. The adversary then chooses losses defined via $\ell_t = (X_t, 1/2)$. 

Now, the learner cannot learn much about the random bit $B$ until they have seen at least $\Omega(1/\eps^2)$ samples from $\Bern(\frac{1}{2} + B\eps)$, and hence until they have seen at least $\Omega(1/\eps^2)$ loss vectors. But during that time, they incur $\Omega(\eps)$ regret per round. Altogether, this implies that the adversary can force the learner to incur a total regret of at least $\Omega(\eps \cdot \max(1/\eps^{2}, T))$. Picking $\eps = 1/\sqrt{T}$, we obtain the well-known $\Omega(\sqrt{T})$ lower bound.

We will start our discussion of lower bounds for the problem of $\cS$-learning with a similar approach. Again, the adversary will begin by uniformly sampling a bit $B \in \{-1, 1\}$. But now, to account for the additional potential asymmetry in the feedback structure across rounds, the adversary will select a sequence of ``scales'' $\eps_1, \eps_2, \dots, \eps_T \geq 0$, and sample the r.v. $X_t$ from the distribution $\Bern(\frac{1}{2} + B\gamma\eps_t)$ (for some ``global scale'' $\gamma > 0$). Finally, the adversary again sets $\ell_t = (X_t, 1/2)$.

Intuitively, we can understand the performance of this strategy as follows. As long as the learner cannot figure out the value of $B$ based on the losses observable at round $t$, the learner will incur an expected regret of $\Omega(\eps_t)$ in that round. In order to prevent the learner from figuring out the value of $B$ in round $t$, the amount of information leaked by the losses in $S_t$ about $B$ should be small. This quantity is roughly proportional to $\sum_{s \in S_t}\eps_s^2$ (in particular, observing $\ell_{s}$ leaks $\Theta(\eps_s^2)$ bits of information about $B$). This motivates writing down the following convex program \eqref{eq:lb_lp}, which we call the \emph{lower bound program}.

\begin{align}
\text{Maximize} \quad & \sum_{t=1}^{T} \eps_t \label{eq:lb_lp} \\
\text{Subject to} \quad &\sum_{t \in S_{t'}} \eps_t^2 \leq 1 \quad \text{ for all } t'\in [T] \notag \\
& \eps_t \geq 0 \quad \text{ for all } t \in [T]\notag
\end{align}

We will write $\LB(\cS)$ to denote the optimal value of the lower bound program. Note that this program shares many structural similarities with the upper bound dual program -- the main difference is that whereas in the upper bound dual program \eqref{eq:ub_dual_lp}, the constraints bound the sum of the squares of the variables over all \emph{orders} in $\cS$, here the sums are over all \emph{neighborhoods} $S_{t'}$ in $\cS$. Since every order $C_c$ is a subset of the neighborhood $S_{t'}$ of its last element, the constraints of the lower bound program are stronger than that of the upper bound dual (and so $\LB(\cS) \leq \UB(\cS)$, as we would expect).

The following theorem formalizes the above intuition and shows that $\LB(\cS)$ is indeed a valid lower bound for the $\cS$-learning problem (up to a universal constant factor).

\begin{theorem}\label{thm:lbone}
Every $\cS$-learning algorithm $\cA$ must incur worst-case regret $\Reg(\cA) \geq \LB(\cS)/100$.
\end{theorem}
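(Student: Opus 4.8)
## Proof proposal for Theorem \ref{thm:lbone}

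The plan is to formalize the information-theoretic intuition already sketched in the text. Fix an optimal solution $(\eps_1, \dots, \eps_T)$ to the lower bound program \eqref{eq:lb_lp}, so that $\sum_t \eps_t = \LB(\cS)$ and $\sum_{t \in S_{t'}} \eps_t^2 \le 1$ for every $t'$. The adversary samples $B \in \{-1, 1\}$ uniformly, picks a small global scale $\gamma > 0$ (to be optimized, something like a small universal constant), and for each round $t$ independently draws $X_t \sim \Bern(\tfrac12 + B\gamma\eps_t)$, setting $\ell_t = (X_t, \tfrac12)$. The first action is the ``risky'' one and the second is the constant $\tfrac12$; note that when $B = +1$ the best fixed action in hindsight is (in expectation) action $1$, and when $B = -1$ it is action $2$, and in each case the per-round expected gap between the two actions is $2\gamma\eps_t$. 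So I would first show that against this distribution, $\E[\Reg(\cA, \bell)] \ge \sum_{t=1}^T \gamma \eps_t \cdot \P[\,x_{t,1} \text{ is on the ``wrong side''}\,] - (\text{lower-order correction})$; more precisely, writing $x_t = A_t(\ell_{S_t})$ for the learner's (random) first-coordinate weight, the regret against the in-hindsight-optimal action is at least $\sum_t 2\gamma\eps_t \cdot |x_{t,1} - \mathbb{1}[B=1]|$ up to the $O(1)$-sized fluctuation of $\sum_t (X_t - (\tfrac12 + B\gamma\eps_t))$ coming from concentration of the empirical loss around its mean (handled by a standard sub-Gaussian / Hoeffding argument, losing only a constant).

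The core step is the per-round information bound. At round $t$ the learner only sees $\{\ell_s : s \in S_t\}$, which is a deterministic function of $\{X_s : s \in S_t\}$. Conditioned on $B$, these are independent Bernoullis, and the KL divergence between the $B=+1$ and $B=-1$ product distributions on these bits is $\sum_{s \in S_t} D_{\mathrm{KL}}(\Bern(\tfrac12 + \gamma\eps_s)\,\|\,\Bern(\tfrac12 - \gamma\eps_s)) = O(\gamma^2 \sum_{s \in S_t} \eps_s^2) \le O(\gamma^2)$, using the program constraint $\sum_{s \in S_t} \eps_s^2 \le 1$. By Pinsker (or directly bounding total variation), the learner's observation at round $t$ cannot distinguish $B=+1$ from $B=-1$ better than with advantage $O(\gamma)$, so for $\gamma$ a sufficiently small universal constant, $\P[x_{t,1} \text{ is on the wrong side of } \tfrac12 \mid B] \ge \tfrac{1}{4}$ (say) averaged over $B$ — equivalently $\E[\,|x_{t,1} - \mathbb{1}[B=1]|\,] \ge \tfrac14$. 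Summing over $t$ gives $\E_B \E[\Reg(\cA,\bell)] \ge \tfrac14 \sum_t 2\gamma\eps_t - O(1) = \tfrac{\gamma}{2}\LB(\cS) - O(1)$. Since the max over loss sequences is at least the expectation, and since we may assume $\LB(\cS)$ is at least a large constant (otherwise the bound $\LB(\cS)/100$ is trivial, as regret is always $\ge 0$ and the claim is vacuous for tiny $\LB$ — or one can rescale), choosing the constants appropriately yields $\Reg(\cA) \ge \LB(\cS)/100$.

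The main obstacle I anticipate is not any single estimate but the bookkeeping of how the per-round ``can't distinguish $B$'' statements combine into a lower bound on the \emph{total} expected regret, because the learner's actions across rounds are correlated through the shared randomness and through overlapping observation windows $S_t$. The clean way around this is to argue round-by-round via linearity of expectation: $\E[\Reg] \ge \sum_t \E[2\gamma\eps_t \cdot |x_{t,1} - \mathbb{1}[B=1]|]$ (plus the concentration correction), and then for each fixed $t$ apply the two-point / Le Cam argument to the pair of distributions on $(B, \ell_{S_t})$ — this only needs the marginal distinguishing bound at round $t$, so correlations across rounds never need to be controlled jointly. A secondary technical point is making precise the reduction ``expected regret of the random-bit adversary $\le$ worst-case regret'' together with the direction of the hindsight-optimal comparator; this is standard but I would state it carefully, noting that conditioned on $B = b$ the comparator $x^*$ achieving the min in \eqref{eq:regret_def} is within $O(1)$ of the ``natural'' choice (action $1$ if $b=1$, action $2$ if $b=-1$) with high probability, by the same Hoeffding bound on $\sum_t (X_t - \tfrac12 - b\gamma\eps_t)$ — and indeed $\sum_t \gamma \eps_t \le \sqrt{T \sum_t \gamma^2 \eps_t^2}$-type crudeness is not even needed since we only need a constant-factor correction.
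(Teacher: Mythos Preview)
Your approach is essentially the paper's: same randomized adversary, same per-round KL bound via the product structure and the feasibility constraint $\sum_{s\in S_t}\eps_s^2\le 1$, then Pinsker and a two-point/Le Cam argument, summed by linearity of expectation. That core is correct.

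However, the concentration detour you introduce is both unnecessary and incorrectly stated. You claim the fluctuation $\sum_t\bigl(X_t-(\tfrac12+B\gamma\eps_t)\bigr)$ is $O(1)$ and plan to use this to argue the hindsight-optimal action nearly coincides with the $B$-dependent ``natural'' action. But that sum has variance $\Theta(T)$, so its typical size is $\Theta(\sqrt{T})$, not $O(1)$; your Hoeffding step would not give what you want. Fortunately, none of this is needed: since $\Reg(\cA,\bell)=\sum_t\langle x_t,\ell_t\rangle-\min_{x^*}\sum_t\langle x^*,\ell_t\rangle$, comparing against \emph{any} fixed action already lower-bounds the regret. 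The paper simply takes $x^*=x^*(B)$ (action $2$ if $B=+1$, action $1$ if $B=-1$) and computes $\E\bigl[\sum_t\langle x_t-x^*(B),\ell_t\rangle\bigr]$ directly. Conditioning on $B$ and using that $X_t$ is independent of $X_{S_t}$ given $B$ (since $t\notin S_t$), the round-$t$ contribution is exactly $\gamma\eps_t\,\E[|x_{t,1}-x^*_1(B)|]$, with no concentration correction at all. Drop the fluctuation step and your argument matches the paper's.

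Two minor slips: the action labels are flipped (when $B=+1$, $\E[X_t]>\tfrac12$, so action $2$ is best, not action $1$), and the per-round expected gap between the two actions is $\gamma\eps_t$, not $2\gamma\eps_t$. Neither affects the structure of the argument.
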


One nice property of the lower bound program \eqref{eq:lb_lp} is that it is polynomial-sized, and can therefore be optimized efficiently. Another feature of this program is that, due to its similarity with the upper bound dual program, it immediately gives us the following multiplicative bound on its optimality.

\begin{theorem}\label{thm:path-cover}
Let $\cS$ be a temporal feedback graph where every neighborhood $S_t$ is contained in the union of at most $R$ orders $C_c$. Then $\UB(\cS)/\LB(\cS) \leq \sqrt{R}$ (and hence the regret bound of Algorithm \ref{alg:ub_alg} is within $\Omega(\sqrt{R})$ of optimal).
\end{theorem}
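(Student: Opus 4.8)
The plan is to relate $\UB(\cS)$ and $\LB(\cS)$ through the upper bound dual program \eqref{eq:ub_dual_lp}, using Theorem \ref{thm:dual} to identify $\UB(\cS)$ with the optimum of that dual. Let $\bmu^*$ be an optimal solution to the upper bound dual, so that $\sum_t \mu^*_t = \UB(\cS)$ and $\sum_{t \in C_c}(\mu^*_t)^2 \le 1$ for every maximal order $C_c$. The idea is to produce from $\bmu^*$ a feasible point $\bell = (\eps_1,\dots,\eps_T)$ for the lower bound program \eqref{eq:lb_lp} by uniformly shrinking: set $\eps_t = \mu^*_t/\sqrt{R}$. Then $\sum_t \eps_t = \UB(\cS)/\sqrt{R}$, so it suffices to check that this $\bell$ is feasible, i.e. that $\sum_{t \in S_{t'}} \eps_t^2 \le 1$ for every $t' \in [T]$; this would give $\LB(\cS) \ge \UB(\cS)/\sqrt{R}$, which is the claim.

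For the feasibility check, fix a neighborhood $S_{t'}$. By hypothesis there are orders $C_{c_1}, \dots, C_{c_R}$ (some may be repeated or we may pad the list) with $S_{t'} \subseteq C_{c_1} \cup \cdots \cup C_{c_R}$. Then
\begin{equation*}
\sum_{t \in S_{t'}} (\mu^*_t)^2 \;\le\; \sum_{j=1}^{R} \sum_{t \in C_{c_j}} (\mu^*_t)^2 \;\le\; \sum_{j=1}^{R} 1 \;=\; R,
\end{equation*}
where the first inequality uses that each $t \in S_{t'}$ lies in at least one $C_{c_j}$ (so the double sum on the right counts each term at least once, and all terms are nonnegative), and the second uses the dual feasibility constraint $\sum_{t \in C_{c_j}}(\mu^*_t)^2 \le 1$. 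Dividing by $R$ gives $\sum_{t \in S_{t'}} \eps_t^2 \le 1$, as required. Hence $\bell$ is feasible for \eqref{eq:lb_lp} and $\LB(\cS) \ge \sum_t \eps_t = \UB(\cS)/\sqrt{R}$, establishing $\UB(\cS)/\LB(\cS) \le \sqrt{R}$. Combined with Theorem \ref{thm:ub_main} (regret at most $O(\UB(\cS)\sqrt{\log K})$) and Theorem \ref{thm:lbone} (regret at least $\LB(\cS)/100$ even for $K=2$), this shows Algorithm \ref{alg:ub_alg} is within a factor $O(\sqrt{R})$ of optimal.

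I don't anticipate a serious obstacle here — the argument is essentially a one-line covering computation once Theorem \ref{thm:dual} is in hand. The only points requiring a little care are: (i) making sure we are allowed to invoke $\UB(\cS)$ as the value of the dual \eqref{eq:ub_dual_lp} rather than the primal, which is exactly Theorem \ref{thm:dual}; and (ii) noting that the covering hypothesis is about the neighborhoods $S_{t'}$ being covered by \emph{orders} $C_c$ of $\cS$ (not arbitrary subsets), which is what lets us apply the dual constraint to each piece of the cover. It is also worth remarking in passing why one expects a bound of this shape: the upper bound dual and the lower bound program differ only in that the former imposes the $\ell_2$ constraint over orders while the latter imposes it over neighborhoods, and since every order is contained in the neighborhood of its last element, the lower bound program's constraints are a priori stronger; the parameter $R$ measures precisely how much slack is lost when one neighborhood must be "split" across several orders.
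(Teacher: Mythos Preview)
Your proposal is correct and follows essentially the same approach as the paper: take an optimal dual solution $\bmu^*$, set $\eps_t = \mu^*_t/\sqrt{R}$, and use the covering hypothesis together with the dual constraints to verify feasibility for the lower bound program. The paper's proof is a terser version of exactly this argument.
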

\begin{proof}
Let $\bmu^*$ be an optimal solution to the upper bound dual program. Note that in such a graph, $\mu_t$ would satisfy $\sum_{t\in S_{t'}} \mu_t^2 \leq R$, and therefore setting $\eps_{t} = \mu_t / \sqrt{R}$ forms a feasible solution to \eqref{eq:lb_lp} with value at least $\UB(\cS)/\sqrt{R}$ and at most $\LB(\cS)$. The conclusion follows.
\end{proof}

There are some classes of feedback graphs where the parameter $R$ in Theorem \ref{thm:path-cover} is $O(1)$ and hence where we can immediately conclude that Algorithm \ref{alg:ub_alg} is nearly optimal (one interesting such class of graphs is those arising from bounded recall, where each $S_t$ is itself an order). The downside, however, is that in general the gap between $\LB(\cS)$ and $\UB(\cS)$ can be quite large (at least $\Omega(T^{1/4})$ -- see Appendix \ref{sec:ub_lb_gap}). In the next section we present a stronger lower bound that comes from a much larger convex program.

\subsection{The independent set lower bound}\label{sec:independent}

One of the main reasons the lower bound in the previous section is lax is every loss which is visible to the learner at round $t$ reveals independent information about $B$. But since not all of these rounds can observe each other, this is inherently a little wasteful -- it would be better if we could ``reuse'' the same signal about $B$ across multiple different rounds.

To implement this idea, let $\Ii(\cS)$ denote the collection of independent sets of the (undirected version) of $\cS$. The adversary then begins by generating a random variable with some bias $Bw_I$ for every independent set $I \in \Ii(\cS)$. Finally, to generate the loss at round $t$, the adversary combines the random variables for sets $I$ containing $t$ (by e.g. taking their majority). This motivates us to write down the following convex program:

\begin{align}
\text{Maximize} \quad & \sum_{t=1}^T \sqrt{\sum_{\substack{I \in \Ii(\cS)\\\text{s.t. } t\in I}} w_{I}^2} \\
\text{Subject to} \quad &\sum_{\substack{I \in \Ii(\cS) \text{ s.t.}\\I \cap \cS_t \neq \emptyset}}w_{I}^2 \leq 1 \quad \text{ for all } t \in [T] \notag
\end{align}
    
We call the above program the \emph{independent set (lower bound) program}, and denote its optimal value by $\ILB(\cS)$. Note that if we restrict the set of feasible set to $w$ that take nonzero values on single nodes, then the independent set program reduces exactly to our original lower bound program, and thus implies that $\ILB(\cS) \geq \LB(\cS)$. But more importantly, this more general program still gives a lower bound on the achievable regret of any algorithm.
\begin{theorem}\label{thm:indepsets}
Every $\cS$-learning algorithm $\cA$ must incur worst-case regret $\Reg(\cA) \geq \ILB(\cS)/50$.
\end{theorem}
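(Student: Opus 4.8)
The plan is to generalize the information-theoretic lower bound argument sketched for Theorem \ref{thm:lbone}, replacing the per-round independent signals $X_t$ with signals built out of the per-independent-set random variables. Fix an optimal solution $w$ to the independent set program. The adversary first samples a uniform bit $B \in \{-1,1\}$; then for each independent set $I \in \Ii(\cS)$ it independently generates a random variable $Y_I$ (e.g. $Y_I \in \{0,1\}$ Bernoulli with bias $\tfrac12 + \gamma B w_I$ for a small global scale $\gamma$, or a $\pm 1$ variable with mean $\gamma B w_I$). The loss at round $t$ is then constructed as a deterministic aggregate of $\{Y_I : t \in I\}$ — for instance, take $Z_t$ to be the (weighted) majority/sign of these $Y_I$, and set $\ell_t = (\text{function of } Z_t,\ 1/2)$. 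The key point is that for any fixed $t$, the collection of variables $\{Y_I : t \in I\}$ conditioned on $B$ has a bias in the ``direction'' of $B$ of magnitude on the order of $\sqrt{\sum_{I \ni t} w_I^2}$ (this is where the objective $\sum_t \sqrt{\sum_{I\ni t} w_I^2}$ enters: a round $t$ contributes expected per-round regret $\Omega(\sqrt{\sum_{I\ni t} w_I^2} \cdot \gamma)$ against the learner, because the learner who has not yet determined $B$ must pay for guessing which of the two loss components is smaller in expectation).

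The second, complementary half is the information bound: I need to show that the losses observable at round $t$, namely $\{\ell_s : s \in S_t\}$, carry only $O(1)$ bits of information (in the $\gamma^2$ scale) about $B$. Observing $\ell_s$ for $s \in S_t$ is a (randomized) function of $\{Y_I : I \cap S_t \neq \emptyset\}$, so by the data processing inequality the mutual information $I(B ; \{\ell_s\}_{s \in S_t})$ is at most $I(B ; \{Y_I : I \cap S_t \neq \emptyset\})$. Since the $Y_I$ are conditionally independent given $B$ and each leaks $\Theta(\gamma^2 w_I^2)$ bits, this is $O(\gamma^2 \sum_{I : I \cap S_t \neq \emptyset} w_I^2) \le O(\gamma^2)$ by the constraint of the independent set program. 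A crucial structural observation — and this is precisely why independent sets (rather than arbitrary sets) are the right objects — is that a set $I$ can contain two distinct rounds $t$ and $s$ with $s \in S_t$ only if there is an edge between them in the (undirected) feedback graph; so when $I$ is independent, the rounds of $I$ are pairwise non-adjacent, which is what makes the signal $Y_I$ legitimately ``reusable'' across the rounds of $I$ without any one round being able to observe the losses that determine another. Concretely, I would verify that for an independent set $I$, no round in $I$ sees another round of $I$, so the value $Y_I$ is safe to spread across all of $I$; the only constraint is the one already encoded, that $S_t$ cannot overlap too many of the $I$'s (in squared-weight).

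With these two pieces, the argument is the standard Fano/Le Cam-style bound: condition on $B$; for each round $t$, either the learner's posterior on $B$ given $\{\ell_s\}_{s\in S_t}$ is close to uniform — in which case the learner pays $\Omega(\gamma\sqrt{\sum_{I\ni t}w_I^2})$ expected regret in round $t$ against the better of the two fixed actions — or it is not, but the total information across all rounds is controlled so this can happen only on a small fraction of ``mass''. Summing the per-round regret bound over $t$ gives $\Omega\!\big(\gamma \sum_t \sqrt{\sum_{I\ni t} w_I^2}\big) = \Omega(\gamma \cdot \ILB(\cS))$ expected regret against one of the two fixed actions, hence $\Omega(\gamma\,\ILB(\cS))$ worst-case regret; choosing $\gamma$ a suitable universal constant (small enough that the information bound keeps each round's posterior non-degenerate, which is possible because the right-hand sides of all the constraints are $1$) yields $\Reg(\cA) \ge \ILB(\cS)/50$. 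Finally, since $\Reg(\cA, \bell)$ for the two-action problem equals $\max$ of regret against action $1$ and regret against action $2$, and the expected regret against the fixed action matching $B$ is already $\Omega(\gamma\,\ILB(\cS))$, an averaging/probabilistic argument produces a single deterministic loss sequence attaining this bound.

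The main obstacle I anticipate is making the ``per-round regret is $\Omega(\gamma\sqrt{\sum_{I\ni t}w_I^2})$ unless $B$ is already essentially determined'' step fully rigorous when the $Y_I$ for $I \ni t$ are combined into a single scalar loss: one must choose the aggregation rule (weighted majority, or a direct sum / Gaussian-type construction if the $Y_I$ are taken Gaussian) so that (i) the conditional mean of the aggregate genuinely scales like $\sqrt{\sum_{I\ni t}w_I^2}$ in absolute terms while staying within $[0,1]$, and (ii) the joint information computation across rounds still decomposes cleanly through the $Y_I$'s. Using Gaussian $Y_I$ with variance $1$ and mean $\gamma B w_I$, setting $\tilde\ell_t = \sum_{I \ni t} w_I Y_I$ (so $\E[\tilde\ell_t \mid B] = \gamma B \sum_{I\ni t} w_I^2$ with standard deviation $\sqrt{\sum_{I\ni t} w_I^2}$, i.e. a bias of $\gamma\sqrt{\sum_{I\ni t}w_I^2}$ in standard-deviation units), and then thresholding/clipping to get a loss in $[0,1]$, is the cleanest route; the clipping introduces only lower-order corrections that I would bound crudely. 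The rest — data processing, conditional independence, and the Le Cam two-point computation — is routine and parallels the proof of Theorem \ref{thm:lbone}.
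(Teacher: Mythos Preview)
Your proposal is correct and follows essentially the same approach as the paper: Gaussian $Y_I$ per independent set, aggregate into a sum $Z_t$ over $I \ni t$, threshold at zero to obtain a biased Bernoulli loss, then data processing through the $Y_I$'s plus Pinsker plus the Le Cam two-point bound. The only cosmetic difference is parameterization---the paper takes $Y_I \sim \mathcal N(\gamma B w_I^2,\, w_I^2)$ and sets $Z_t = \sum_{I \ni t} Y_I$, whereas you take $Y_I \sim \mathcal N(\gamma B w_I, 1)$ and set $\tilde\ell_t = \sum_{I \ni t} w_I Y_I$; both give $Z_t$ Gaussian with mean $\gamma B\sum_{I\ni t}w_I^2$ and variance $\sum_{I\ni t}w_I^2$, and the paper's Lemma~\ref{lem:epstlowerbound} is exactly the ``thresholding preserves $\Theta(\text{SNR})$ bias'' step you anticipate.
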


We leave it as an interesting open question to characterize the gap between $\ILB(\cS)$ and $\UB(\cS)$. We conjecture that this gap is at most a universal constant (thus showing that both Algorithm \ref{alg:ub_alg} and this lower bound are optimal up to constant factors).

\begin{conjecture}\label{conj:tight}
\sloppy{There exists a constant $\gamma$ such that for any temporal feedback graph $\cS$, $\UB(\cS)/\ILB(\cS) \leq \gamma$.}
\end{conjecture}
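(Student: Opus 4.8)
We do not have a proof of Conjecture~\ref{conj:tight}; here is the line of attack I would pursue, which generalizes the argument behind Theorem~\ref{thm:bmlb} (the transitive case).

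\emph{Reduction.} By Theorem~\ref{thm:dual} it suffices to show: given any feasible solution $\bmu$ to the upper bound dual program~\eqref{eq:ub_dual_lp}, one can construct a feasible solution $w$ to the independent set program with objective value at least $\sum_t \mu_t / \gamma$; Theorem~\ref{thm:indepsets} then turns this into the desired regret lower bound. It is convenient to substitute $v_I := w_I^2 \ge 0$, so the goal becomes: build a nonnegative ``fractional measure'' $v$ on $\Ii(\cS)$ with (i) $\sum_{I : I \cap S_t \neq \emptyset} v_I \le 1$ for every $t \in [T]$, and (ii) $\sum_{I \ni t} v_I \gtrsim \mu_t^2$ for every $t$, so that $\sum_t \sqrt{\sum_{I \ni t} v_I} \gtrsim \sum_t \mu_t = \UB(\cS)$. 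Restricting $v$ to singleton independent sets recovers exactly the weak lower bound program~\eqref{eq:lb_lp}, and the $\Omega(T^{1/4})$ gap of Appendix~\ref{sec:ub_lb_gap} shows this restriction is provably insufficient, so any proof must genuinely exploit large independent sets.

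\emph{The transitive template.} When $\cS$ is transitive, one embeds the rounds into a time line, assigning each round $t$ an interval $[p_t, q_t]$ of length $\asymp \mu_t^2$, arranged so that $s \in S_t$ forces $[p_s,q_s]$ to precede $[p_t,q_t]$ (possible because transitivity makes the visibility relation behave like a partial order with well-behaved down-sets). Reading off $\mathrm{sign}(L(q_t)-L(p_t))$ for a biased Brownian motion $L$ amounts to the choice ``$v_{I_\tau}\,d\tau \propto d\tau$'', where $I_\tau := \{t : \tau \in [p_t,q_t]\}$; this $I_\tau$ is an independent set because intervals of adjacent rounds are disjoint. Then $\sum_{I \ni t} v_I$ equals the length of $[p_t,q_t]$, which is $\asymp \mu_t^2$, giving (ii), while $\sum_{I : I \cap S_t \neq \emptyset} v_I$ equals the Lebesgue measure of $\bigcup_{s \in S_t}[p_s,q_s]$; the point is that along any single order this union has total length $\le 1$ by dual feasibility $\sum_{s \in C_c}\mu_s^2 \le 1$, and transitivity forces the orders comprising $S_t$ to \emph{nest} rather than spread out along the line, so the whole union still has measure $O(1)$, giving (i).

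\emph{General case and obstacle.} To attack arbitrary $\cS$ I would replace the time line by a flexible probability space $(\Omega,\mathbb{P})$ carrying a white noise, and look for a map $t \mapsto R_t \subseteq \Omega$ with $\mathbb{P}(R_t) \asymp \mu_t^2$, with $R_s \cap R_t = \emptyset$ whenever $s,t$ are adjacent in the undirected version of $\cS$, and --- the crux --- with $\mathbb{P}\!\big(\bigcup_{s \in S_t} R_s\big) = O(1)$ for every $t$; white noise over the regions $R_t$ then yields a $v$ satisfying (i) and (ii) as above. Equivalently, one wants a ``fractional interval representation'' of $\cS$ in which every down-set $S_t$ is covered by $O(1)$ worth of measure. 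I expect producing such an embedding to be the main obstacle: for non-transitive $\cS$ the visibility relation is not a partial order, and its orders (chains) and independent sets (antichains) need not obey any Dilworth-type min--max identity, so there is no a priori reason the $O(\mathrm{width})$ chains inside $S_t$ can be forced to overlap enough for their union to stay $O(1)$. Two natural intermediate goals: (a) obtain $\UB(\cS)/\ILB(\cS) = O(\mathrm{polylog}\,T)$ via a randomized construction of $v$ --- e.g. repeatedly sample a random maximal independent set of $\cS$ and weight each by how often its rounds co-occur --- which would already be new; and (b) remove the polylog by matching the dual of the independent set program against the structure of the optimal primal solution of~\eqref{eq:ub_lp} established in Lemmas~\ref{lem:complementary-slackness} and~\ref{lem:basis}, which may pin down exactly which independent sets must carry the mass.
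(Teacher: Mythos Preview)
The statement is a conjecture that the paper explicitly leaves open; there is no proof in the paper to compare against. You correctly recognize this and offer a line of attack rather than a proof, which is the appropriate response.

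Your proposed reduction and your reading of the transitive case are accurate and in fact match the paper's own perspective: the paragraph immediately following Theorem~\ref{thm:bmlb} makes precisely the observation you make, that the Brownian-motion construction can be viewed as producing a feasible point of the independent set program by partitioning $[0,1]$ into sub-intervals labeled by independent sets and taking $w_I$ to be the square root of the corresponding length. One small clarification on your summary of the transitive argument: the reason $\bigcup_{s\in S_t}[q_s,p_s]$ has measure $O(1)$ is not really a ``nesting'' phenomenon but simply that the inductive construction forces every $p_s\le 1$ (each $p_s$ equals $\sum_{s'\in C}\mu_{s'}^2$ for some order $C$ ending at $s$, hence $\le 1$ by dual feasibility), so every interval lives inside $[0,1]$; the paper then bounds the KL of the entire path on $[0,1]$ and applies data processing, sidestepping any explicit measure computation.

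Your identification of the obstacle in the general case is also the right one: without transitivity there is no inductive argument guaranteeing that the regions $R_s$ for $s\in S_t$ can be made to fit inside a set of bounded measure, because $S_t$ need not decompose into a controlled number of orders (indeed Theorem~\ref{thm:path-cover} already isolates this parameter $R$ as the relevant quantity for the weaker program~\eqref{eq:lb_lp}). The paper offers no further progress on the general conjecture beyond the transitive case, so your proposal is as far as the paper itself gets.
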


In the next section, we employ a variant of this technique to prove a tight lower bound for all transitive graphs, providing partial evidence for this conjecture.

\section{Transitive feedback graphs}
\label{sec:transitive}

In this section, we turn our attention to the case of transitive feedback graphs $\cS$. Recall that these are acyclic feedback graphs $\cS$ where if $u \in S_{t}$ and $v \in S_{u}$, then $v \in S_{t}$ (that is, if you can see loss $\ell_{u}$ at round $t$, you can also see all losses you could see at round $u$). 

\subsection{An efficient algorithm}

We will begin by showing that we can efficiently implement Algorithm \ref{alg:ub_alg} for any transitive feedback graph $\cS$. In particular, it suffices to show that we can efficiently find an optimal basis $\cB$ and associated primal solution $\blambda^*$ in time polynomial in $T$.

Note that for a transitive feedback graph $\cS$, any directed path $t_1 \rightarrow t_2 \rightarrow \dots \rightarrow t_w$ in $\cS$ forms an order. This allows us to construct an efficient separation oracle (and hence efficiently solve) the upper bound dual program \eqref{eq:ub_dual_lp}.

\begin{lemma}\label{lem:transitive-dual}
If $\cS$ is a transitive feedback graph, then we can find an optimal solution $\bmu^*$ to the upper bound dual program (to within $\eps$ additive error) in time $\poly(T, 1/\eps)$. Moreover, there exists an efficiently computable subgraph $\cS'$ of $\cS$ and two sets of rounds $\src(\cS')$ and $\dest(\cS')$ such that $\cC(\bmu^*)$ is equal to the set of all directed paths contained within $\cS'$ that start at a node in $\src(\cS')$ and end at a node in $\dest(\cS')$.
\end{lemma}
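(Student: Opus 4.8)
The plan is to prove the two assertions separately: first that the upper bound dual program can be solved to additive error $\eps$ in time $\poly(T,1/\eps)$, and then that, given an exact optimal dual solution $\bmu^*$, the set $\cC(\bmu^*)$ of tight orders has the claimed path description.

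\textbf{Solving the dual.} The key structural fact is that, since $\cS$ is transitive, its orders are exactly its directed paths: any directed path $t_1 \to \dots \to t_w$ is an order by iterating the transitivity property, and conversely $\cS$ is the comparability graph of a strict partial order (with $s$ below $t$ precisely when $s \in S_t$), so every order is a chain of that order, hence a directed path of $\cS$. Consequently the dual constraint ``$\sum_{t \in C}\mu_t^2 \le 1$ for every order $C$'' is equivalent to saying that the heaviest directed path in $\cS$, under node weights $\mu_t^2$, has weight at most $1$. That heaviest-path value is computable in $O(T^2)$ time by the standard longest-path dynamic program on the DAG $\cS$, which yields a separation oracle for the convex dual feasible region: on input $\bmu \ge 0$ we compute a heaviest path $P^\star$, and if its weight exceeds $1$ we return the gradient of the violated convex constraint $\sum_{t \in P^\star}\mu_t^2 \le 1$ as a separating hyperplane. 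Feeding this oracle to the ellipsoid method for ``maximize $\sum_t \mu_t$'' produces an $\eps$-additively optimal $\bmu^*$ in time $\poly(T, 1/\eps)$. For the structural part we treat $\bmu^*$ as an exact optimum, the standard perturbation bookkeeping absorbing the $\eps$-slack.

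\textbf{Describing $\cC(\bmu^*)$.} Put $w_t := (\mu^*_t)^2$ and let $W^\star$ be the heaviest directed-path weight. I would first argue $W^\star = 1$: feasibility gives $W^\star \le 1$, and if $W^\star < 1$ then rescaling $\bmu^*$ by $1/\sqrt{W^\star}$, or (if $\bmu^* = 0$) replacing it by a single unit coordinate, strictly increases the objective while staying feasible, contradicting optimality; in particular $\cC(\bmu^*)\neq\emptyset$. Next, recall the maximal orders of $\cS$ are exactly the maximal chains of its partial order, i.e. the source-to-sink directed paths of the Hasse diagram $D$ (the subgraph of $\cS$ obtained by deleting every edge $s\to t$ for which $s\to r\to t$ for some $r$). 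Compute by dynamic programming on $D$ the quantities $f(t)$ and $g(t)$, the heaviest weight of a $D$-path ending at $t$ and starting at $t$ respectively (each including $w_t$); then $f(t)+g(t)-w_t$ is the heaviest weight of an order through $t$. Now set $V' := \{t : f(t)+g(t)-w_t = 1\}$, let $\cS'$ be the subgraph of $D$ with vertex set $V'$ keeping exactly the edges $s\to t$ of $D$ with $f(s)+g(t)=1$, and let $\src(\cS')$ (resp. $\dest(\cS')$) be the minimal (resp. maximal) elements of the partial order lying in $V'$; all of this is $\poly(T)$-computable from $\bmu^*$. One then verifies $\cC(\bmu^*)$ equals the set of $\src(\cS')$-to-$\dest(\cS')$ directed paths in $\cS'$ by two inclusions. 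For ($\subseteq$): a weight-$1$ maximal order $C=(t_1,\dots,t_w)$ is a maximal chain, so its consecutive pairs are Hasse edges and $t_1,t_w$ are extremal; for each $i$, the prefix of $C$ up to $t_i$ and the suffix from $t_{i+1}$ give $f(t_i)\ge\sum_{j\le i}w_{t_j}$ and $g(t_{i+1})\ge\sum_{j>i}w_{t_j}$, while $f(t_i)+g(t_{i+1})\le W^\star=1$ since it is the weight of a path through $t_i\to t_{i+1}$, so these are equalities and all $t_i\in V'$, making $C$ a valid path in $\cS'$. For ($\supseteq$): if $P=(t_1,\dots,t_w)$ is an $\src$-to-$\dest$ path in $\cS'$ it is a saturated chain between extremal elements, hence a maximal order, and an induction along $P$ using $f(t_i)+g(t_{i+1})=1$ (the $\cS'$-edge condition) together with $t_{i+1}\in V'$ shows $f(t_i)=\sum_{j\le i}w_{t_j}$ for all $i$ (base case $f(t_1)=w_{t_1}$ since $t_1$ is minimal); taking $i=w$ with $g(t_w)=w_{t_w}$ and $t_w\in V'$ forces $\sum_j w_{t_j}=1$, so $P\in\cC(\bmu^*)$.

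\textbf{Main obstacle.} I expect the delicate step to be the reverse inclusion above: one must show that stitching together edges that are only \emph{locally} tight (each satisfying $f(s)+g(t)=1$) yields a path that is \emph{globally} of weight exactly $1$. The telescoping induction is what makes this go through, and it leans essentially on transitivity (so that orders coincide with paths and $f,g$ decompose cleanly) and on the complementary-slackness picture from Lemma \ref{lem:complementary-slackness}. Secondary points requiring care are the bookkeeping that identifies maximal orders with Hasse-diagram source-to-sink paths, and the passage from an $\eps$-approximate dual optimum to an exact one for the purposes of defining $\cS'$.
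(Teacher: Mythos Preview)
Your proof is correct and follows the same overall strategy as the paper: use the longest-weighted-path dynamic program on the DAG as a separation oracle for the ellipsoid method, and then read off the tight orders from the DP values. The construction of $\cS'$ differs in detail. The paper computes only the forward values $V_t$ (your $f(t)$), builds $\cS'$ by a backward traversal from $\{t:V_t=1\}$ that keeps any edge $s\to t$ of $\cS$ with $V_t-V_s=(\mu^*_t)^2$, and sets $\src=\{t\in\cS':V_t=(\mu^*_t)^2\}$, $\dest=\{t:V_t=1\}$; the telescoping $\sum_i(V_{t_i}-V_{t_{i-1}})=V_{t_k}=1$ is exactly the ``local tightness $\Rightarrow$ global tightness'' step you flag. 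Your two-sided $f,g$ construction on the Hasse diagram is a clean variant: it is symmetric, and by working with Hasse edges and poset-minimal/maximal endpoints it guarantees that the resulting paths are saturated chains, hence genuinely \emph{maximal} orders. The paper's $\cS'$ can contain non-Hasse edges that skip weight-$0$ nodes, so its $\src$-to-$\dest$ paths are weight-$1$ orders but not necessarily maximal; this is harmless for the downstream flow decomposition in Theorem~\ref{thm:transitive-main}, but your version matches the formal definition of $\cC(\bmu^*)$ more precisely.
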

\begin{proof}
We will first provide an efficient separation oracle which, given a $\bmu \geq 0$, will either describe which of the order constraints $c \in [N]$ $\bmu$ violates, or report that $\bmu$ is a valid dual solution. With such an oracle, we can use the ellipsoid method to solve the convex program to within $\eps$ additive error in time $\poly(T, 1/\eps)$.

Since every directed path in $\cS$ corresponds to some order $C_c$, it suffices to be able to find the directed path $P$ in $\cS$ which maximizes $\sum_{t \in P} \mu_t^2$ (if this maximum is larger than $1$, we have a violating constraint given by the order $P$ corresponds to). But this is equivalent to computing the longest weighted path in a directed acyclic graph (where vertex $t$ has weight $\mu_t^2$), which can be solved efficiently in $O(T^2)$ time via dynamic programming.

Now, note that $C(\bmu^*)$ consists of all directed paths $P$ where $\sum_{t \in P} \mu_t^2 = 1$. We can use the same dynamic program to show that $C(\bmu^*)$ simply contains all paths in a subgraph $\cS'$ of $\cS$ that start in a source set $\src(\cS')$ and end in a target set $\dest(\cS')$. For each node $t \in [T]$, let $V_t$ be the maximum value of $\sum_{s \in P} \mu_{s}^2$ over any directed path $P$ ending at $t$. We can use the following procedure to construct $\cS'$:

\begin{itemize}
\item Start by adding the set of vertices $t$ where $V_{t} = 1$ to $\cS'$.
\item For every round $t$ in $\cS'$ that hasn't been processed, find all ancestors $s \in S_t$ with the property that $V_{t} - V_{s} = \mu_{t}^2$. For each such ancestor $s$, add the edge $s \rightarrow t$ to $\cS'$ (i.e., add $s$ to $S'_t$), and process $s$ if it has not already been processed.
\item Finally, let $\src(\cS')$ equal the set of rounds $t \in \cS'$ where $V_{t} = \mu_{t}^2$, and let $\dest(\cS')$ equal the set of rounds $t$ in $\cS'$ where $V_{t} = 1$ (equivalently, these are the sources and sinks of the DAG $\cS'$).
\end{itemize}

To see why this procedure works, note that any path $P = (t_1, t_2, \dots, t_k)$ from $\src(\cS')$ to $\dest(\cS')$ along edges of $\cS'$ will satisfy $\sum_{t \in P} \mu_{t}^2 = \sum_{i=1}^{k} (V_i - V_{i-1}) = V_k = 1$. Moreover, any edge this algorithm does not select cannot possibly be included in a path in $C(\bmu^*)$ (the sum of $\mu_t^2$ over a path from $t_i$ to $t_j$ containing this edge must be strictly less than $V_{j} - V_{i} \leq 1$). It follows that $C(\bmu^*)$ simply contains all source-destination paths in $\cS'$.
\end{proof}

We can now use the dual solution (and the characterization of $\cC(\bmu^*)$) provided by Lemma \ref{lem:transitive-dual} to find an efficient, sparse solution to the primal.

\begin{theorem}\label{thm:transitive-main}
If $\cS$ is a transitive feedback graph, then we can find an optimal solution $\blambda^*$ to the upper bound program (to within additive $\eps$ error) supported on a basis $\cB$ of size $|\cB| \leq T$ in time $\poly(T, 1/\eps)$.
\end{theorem}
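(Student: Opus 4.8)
The plan is to turn the explicit description of $\cC(\bmu^*)$ supplied by Lemma~\ref{lem:transitive-dual} into a polynomially‑sized flow problem. First I would invoke Lemma~\ref{lem:transitive-dual} to compute (up to $\eps$) an optimal dual solution $\bmu^*$ together with the subgraph $\cS'$ and the sets $\src(\cS')$, $\dest(\cS')$, so that $\cC(\bmu^*)$ is precisely the collection of directed paths of $\cS'$ from $\src(\cS')$ to $\dest(\cS')$. By Lemma~\ref{lem:complementary-slackness}, it then suffices to produce a nonnegative weighting $\rho$ of the paths $c\in\cC(\bmu^*)$ with $\sum_{c:\,t\in C_c}\rho_c = 1/\mu^*_t$ for every $t$ — that is, a feasible point of the linear program \eqref{eq:ub_lp_basis} — supported on at most $T$ orders, since $\lambda^*_{c,t}=\rho_c\mu^*_t$ then solves \eqref{eq:ub_lp}. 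A useful observation here is that every feasible $\rho$ automatically has objective $\sum_c\rho_c = \sum_c\rho_c\sum_{t\in C_c}(\mu^*_t)^2 = \sum_t(\mu^*_t)^2\sum_{c:\,t\in C_c}\rho_c = \sum_t\mu^*_t = \UB(\cS)$, using $\sum_{t\in C_c}(\mu^*_t)^2=1$ for $c\in\cC(\bmu^*)$; thus \eqref{eq:ub_lp_basis} is really a feasibility problem and any basic feasible solution is optimal.

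The obstacle is that $\cC(\bmu^*)$ may contain exponentially many paths, so \eqref{eq:ub_lp_basis} cannot be written down directly; the flow reformulation is designed to circumvent exactly this. I would form the acyclic network obtained from $\cS'$ by adding a super‑source $\sigma$ with an edge to every vertex of $\src(\cS')$ and a super‑sink $\tau$ with an edge from every vertex of $\dest(\cS')$. A weighting $\rho$ as above corresponds exactly to a $\sigma$–$\tau$ flow in this network whose throughput at each internal vertex $t$ equals $1/\mu^*_t$, and such a flow exists because the primal solution from Lemma~\ref{lem:complementary-slackness} gives one. Since the network has $\poly(T)$ edges, I can find such a flow $f$ by solving a polynomially‑sized linear program, and then decompose $f$ into at most $\poly(T)$ directed $\sigma$–$\tau$ paths by the standard greedy flow‑decomposition procedure for DAGs. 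Stripping $\sigma$ and $\tau$ yields a polynomially‑sized subfamily $\cC'\subseteq\cC(\bmu^*)$ that supports a feasible solution of \eqref{eq:ub_lp_basis}.

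Finally, I would restrict \eqref{eq:ub_lp_basis} to the variables $\rho_c$ with $c\in\cC'$ — now a polynomial‑sized LP with $T$ equality constraints — and convert the feasible point coming from $f$ into a basic feasible solution $\rho^*$ in polynomial time; it has at most $T$ nonzero coordinates. Setting $\lambda^*_{c,t}=\rho^*_c\mu^*_t$ for $t\in C_c$ and $0$ otherwise gives a solution to \eqref{eq:ub_lp} that is feasible, has objective $\UB(\cS)$, and is supported on the basis $\cB=\{c:\rho^*_c>0\}$ with $|\cB|\le T$; each such $c$ is a directed path of $\cS'\subseteq\cS$, hence an order of $\cS$, as required. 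Apart from the exponential size of $\cC(\bmu^*)$ (handled above), the only remaining care is bookkeeping the additive $\eps$‑error inherited from the approximate dual — which enters through the slightly‑sub‑unit values $\sum_{t\in C_c}(\mu^*_t)^2$ and the magnitudes $1/\mu^*_t$ — and verifying that the flow‑decomposition paths all lie in $\cC(\bmu^*)$, both of which follow from the structural guarantees of Lemma~\ref{lem:transitive-dual}.
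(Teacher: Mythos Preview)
Your proposal is correct and follows essentially the same route as the paper: invoke Lemma~\ref{lem:transitive-dual} to characterize $\cC(\bmu^*)$ as source-to-sink paths in $\cS'$, recast the linear program~\eqref{eq:ub_lp_basis} as a polynomially-sized flow problem on $\cS'$, solve it and apply flow decomposition to obtain $\poly(T)$ paths, then pass to a basic feasible solution supported on at most $T$ of them. Your explicit super-source/super-sink construction and the observation that the objective of~\eqref{eq:ub_lp_basis} is constant across feasible points are clean touches, but do not materially differ from the paper's argument.
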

\begin{proof}
By Lemma \ref{lem:transitive-dual}, we can efficiently construct a dual solution $\bmu^*$ and corresponding set $\cC(\bmu^*)$. Our approach will be to use this to find a sparse solution to the linear program \eqref{eq:ub_lp_basis} in the proof of Lemma \ref{lem:basis} in the $\rho_c$ random variables (recall that such a solution characterizes an optimal primal solution via $\lambda^{*}_{c, t} = \rho_c\mu^*_t$ for each $c \in \cC(\bmu^*)$, $t \in C_c$). 

The linear program in \eqref{eq:ub_lp_basis} has $|\cC(\bmu^*)|$ random variables, and therefore it would be inefficient to solve directly. Instead, we will show that we can use the structure of $\cC(\bmu^*)$ (as the set of all source-destination paths in the subgraph $\cS'$) to rewrite it as a flow problem. Indeed, consider the following linear program in the variables $f_{e}$ (for each edge $e = (s, t)$ in the edge set $E(\cS')$ of $\cS'$):

\begin{align}
\text{Minimize} \quad & \sum_{\substack{t \in \dest(\cS')\\ (s,t) \in E(\cS')}} f_{s, t} \label{eq:ub_lp_flow} \\
\text{Subject to} \quad &\sum_{s \mid (s, t) \in \cS'} f_{s, t} = 1/\mu_t \quad \text{ for all } t \in [T] \notag \\
&\sum_{s \mid (s, t) \in \cS'} f_{s, t} = \sum_{s \mid (t, s') \in \cS'} f_{t, s'} \quad \text{ for all } t \in \cS' \setminus (\src(\cS') \cup \dest(\cS')) \notag \\
& f_{s, t} \geq 0 \quad \text{ for all } (s, t) \in E(\cS').\notag
\end{align}

The linear program \eqref{eq:ub_lp_flow} is a polynomial-sized LP, so we can solve it efficiently. At the same time, it is equivalent to the linear program \eqref{eq:ub_lp_basis} in the following sense: first, given any solution $\rho_{c}$ to \eqref{eq:ub_lp_basis}, if we let $f_{e} = \sum_{c \mid e \in c} \rho_{c}$ equal the sum of $\rho_{c}$ over all paths $C_c$ that contain $e$, then $f_e$ satisfies \eqref{eq:ub_lp_flow} (with the same objective value). Conversely, given any flow $f_e$ solving \eqref{eq:ub_lp_flow}, we can decompose it into a positive combination of source-destination paths. If we let $\rho_{c}$ be the weight of the path $C_c$ in this decomposition, we can likewise check that $\rho_{c}$ satisfies \eqref{eq:ub_lp_basis} (also with the same objective value).

There are well-known efficient procedures for flow-decomposition (see e.g. \cite{ahuja1988network}), which take a flow $f_{e}$ and return a positive combination of at most $O(|E(\cS')|)$ paths. By doing this, we can obtain an optimal solution $\rho_{c}$ to \eqref{eq:ub_lp_basis} supported on a basis of size at most $O(T)$. We can then shrink this to an optimal basis $\cB$ of size at most $T$ by removing all other variables from the LP \eqref{eq:ub_lp_basis} and using any method for finding a basic feasible solution to the resulting program (e.g. optimizing a random linear functional over the face containing the optimum).
\end{proof}

Theorem \ref{thm:transitive-main} implies that given any transitive feedback graph $\cS$, we can efficiently construct the algorithm of Corollary \ref{cor:efficient} that runs in time $O(KT)$ per iteration and incurs regret at most $O(\UB(\cS)\sqrt{\log K})$. 

\subsection{An asymptotically tight lower bound}

Finally, we prove a lower bound of $\Omega(\UB(\cS))$ on the regret of any algorithm when the feedback graph $\cS$ is transitive. This combined with our upper bound in Corollary~\ref{cor:efficient} settles the optimal learning rate for the case of transitive graphs. 

\begin{theorem}[Tight lower bound for transitive $\cS$]\label{thm:bmlb}
For any transitive graph $\cS$, every $\cS$-learning algorithm $\cA$ must incur worst-case regret $\Reg(\cA) \geq \UB(\cS)/50$.
\end{theorem}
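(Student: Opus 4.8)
The plan is to reduce this to the independent set lower bound of Theorem~\ref{thm:indepsets}: I will show that for any transitive $\cS$ there is a feasible solution $w$ of the independent set program whose objective value is \emph{exactly} $\UB(\cS)$, so that $\ILB(\cS) \ge \UB(\cS)$ and hence $\Reg(\cA) \ge \ILB(\cS)/50 \ge \UB(\cS)/50$. The construction lives on a ``time'' interval $[0,1]$ (which, in the probabilistic language behind Theorem~\ref{thm:indepsets}, is where the biased Brownian motion lives). Start from an optimal solution $\bmu^*$ of the upper bound dual program~\eqref{eq:ub_dual_lp}; by Theorem~\ref{thm:dual} its value is $\sum_t \mu^*_t = \UB(\cS)$. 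For each $t$, let $V_t$ be the maximum of $\sum_{s \in P}(\mu^*_s)^2$ over directed paths $P$ in $\cS$ ending at $t$ (including the one-vertex path $(t)$) — the same potential used in Lemma~\ref{lem:transitive-dual}. Associate with round $t$ the interval $I_t = [p_t, q_t] := [\,V_t - (\mu^*_t)^2,\; V_t\,]$.

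The crux — and the one place transitivity is essential — is the claim that if $s \in S_t$ then $q_s \le p_t$, i.e.\ $I_s$ lies entirely to the left of $I_t$ with disjoint interiors. Since $s \in S_t$ is an edge $s \to t$ of $\cS$, appending it to a maximum-weight path ending at $s$ yields a directed path ending at $t$ (it cannot repeat $t$, as $\cS$ is acyclic) of weight $V_s + (\mu^*_t)^2$, whence $V_s \le V_t - (\mu^*_t)^2 = p_t$. Transitivity is also what makes $I_t \subseteq [0,1]$: in a transitive graph every directed path is an order, hence sits inside some maximal order, so the corresponding dual constraint gives $V_t = q_t \le 1$ (and $p_t \ge 0$ is automatic). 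Contrapositively: if $\mathrm{int}(I_s) \cap \mathrm{int}(I_t) \neq \emptyset$ then $s$ and $t$ are non-adjacent in $\cS$.

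Now build $w$. Let $0 = \alpha_0 < \alpha_1 < \dots < \alpha_m = 1$ list the distinct endpoints among all $\{p_t, q_t\}$, let $A_k = [\alpha_{k-1}, \alpha_k]$ be the resulting ``atoms'', and set $J_k = \{t : A_k \subseteq I_t\}$. By the previous paragraph each $J_k$ is an independent set of $\cS$; put $w_{J_k} = \sqrt{|A_k|}$ and $w_I = 0$ for every other independent set $I$. For feasibility: every $J_k$ meeting $S_{t'}$ contains some $s \in S_{t'}$ with $A_k \subseteq I_s \subseteq [0, p_{t'}]$, so $\sum_{I : I \cap S_{t'} \neq \emptyset} w_I^2 \le \sum_{k : A_k \subseteq [0, p_{t'}]} |A_k| = p_{t'} \le 1$ (this becomes $\le q_{t'} \le 1$ instead, if the program's neighborhood is meant to include $t'$). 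For the objective: the atoms contained in $I_t$ partition $I_t$, so $\sum_{I : t \in I} w_I^2 = \sum_{k : A_k \subseteq I_t} |A_k| = |I_t| = (\mu^*_t)^2$, and therefore $\sum_{t} \sqrt{\sum_{I : t \in I} w_I^2} = \sum_t \mu^*_t = \UB(\cS)$. Rounds with $\mu^*_t = 0$ have a degenerate interval and lie in no $J_k$, so they need no special handling. This yields $\ILB(\cS) \ge \UB(\cS)$, and Theorem~\ref{thm:indepsets} completes the proof.

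The hard part is purely conceptual: recognizing that the dual potentials $V_t$ should be read as left-endpoints on a time axis, so that ``$s$ is seen by $t$'' forces $I_s$ to finish before $I_t$ starts, and seeing that transitivity is exactly the hypothesis that (i) keeps every directed path an order (so $V_t \le 1$) and (ii) lets one extend a path by an incoming edge. Once the intervals are in hand, the atom decomposition, the feasibility check, and the identity matching the objective to $\sum_t \mu^*_t$ are all routine bookkeeping, and the genuinely probabilistic step — turning these correlated ``interval signals'' into a loss sequence via a biased Brownian motion and running the standard information-theoretic regret argument — is already encapsulated in Theorem~\ref{thm:indepsets}.
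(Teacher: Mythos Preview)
Your approach is essentially the paper's: you build the same intervals $[V_t - (\mu^*_t)^2,\, V_t]$ from the dual optimum and exploit their overlap structure, the only difference being that the paper runs the biased-Brownian-motion regret argument directly on these intervals while you package the construction as a feasible point of the independent set program and invoke Theorem~\ref{thm:indepsets} (the paper's closing remark after the proof explicitly makes this identification). One small technical caveat: distinct atoms $A_k, A_{k'}$ can yield the \emph{same} set $J_k = J_{k'}$ (e.g.\ when one interval is strictly nested inside another), so ``$w_{J_k} = \sqrt{|A_k|}$'' is not well-defined as written; the obvious fix is to set $w_J^2 = \sum_{k : J_k = J} |A_k|$, after which your feasibility and objective computations go through verbatim.
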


\begin{proof}
To show this lower bound, we use the fact that the value of the upper bound program $\UB(\cS)$ is equal to its dual~\eqref{eq:ub_dual_lp}. Then, for every instance $(\mu_t)_{t=1}^T$ of~\eqref{eq:ub_dual_lp}, we show $\Omega(\sum_{t=1}^T \mu_t)$ as a lower bound for the value of the regret. To show this lower bound, we take a similar strategy as in the proof of Theorem~\ref{thm:lbone}; the adversary flips a coin $B\in \{-1,+1\}$ and then consider loss vectors $\ell_t = (X_t, \frac{1}{2})$ where $X_t$ is a $\gamma\epsilon_t$-biased Bernoulli variable, where the bias is to one if $B=1$ and to zero if $B=-1$. Here again the adversary attempts to use shared randomness between $\ell_t$'s to block the chances of the player to learn $B$. At the same time, the adversary has to be careful not to reveal any information about the $\ell_t$ given the loss vectors $\ell_s$'s, $s\in \cS_t$ which it can observe at time $t$. Here, the adversary exploits the transitive nature of $\cS$ to cook up these random variables and considers a linearly shifted Brownian motion with rate $\gamma$:
\begin{align*}
    L_t = \gamma B t + B_t.
\end{align*}
The adversary defines each $X_t$ as a positivity indicator variable of a chunk of the process $L$. Namely, for times $p_t \geq q_t$,
\begin{align*}
    X_t = \mathrm 1\{L_{p_t} - L_{q_t} \geq 0\}
\end{align*}

Specifically, for each time $t \in [T]$, the adversary defines 
\begin{align*}
    &q_t = \max_{s\in \cS_t}\{p_s\},\\
    &p_t = q_t + \mu_t^2.
\end{align*}
With this definition, first we show by induction that $p_t \leq 1$ for all $t$. We strengthen the hypothesis of induction and concurrently show the argument that for any $t$, there is an ordered clique $p$ in $\cS$ ending at $t$ and with $\sum_{s\in p}\mu_s^2 = p_t$. The hypothesis of induction is trivial since $q_1 = 0$ and $p_1 = \mu_1^2$. Now for arbitrary $t\leq T$, let $s\in \cS_t$ be the index with maximum $p_s$ in $\cS_t$. From the hypothesis of induction we know there is an ordered clique $p$ ending at $s$ with $\sum_{s'\in p} \mu_{s'}^2 = p_s$. Now from definition we have $p_t = p_s + \mu_t^2$. Therefore, $p_t = \sum_{s\in p'}\mu_{s}^2$ where $p'$ is the ordered clique of $p$ concatenated with $t$. This shows the step of induction. Finally, note that from the argument that we showed, that there is an ordered clique $p$ ending at $t$ with $\sum_{s \in p}\mu_s^2 = p_t$, we conclude $p_t = \sum_{s \in p}\mu_s^2 \leq 1$ because of the constraint in the dual of the upper bound program. 

The second observation is that with this definition, $X_t$ becomes independent of $X_s$ for $s\in \cS_t$. This follows from the independence of disjoint increments of Brownian motion. Next, we show that with small enough choice of constant $\gamma$, the adversary cannot distinguish between $B=\pm 1$ cases with constant probability. For this, similar to the proof of Theorem~\ref{thm:lbone} it is enough to bound the total variation distance between $ Q^{-1}(\cup_{s \in \cS_t} (L_{p_s}-L_{q_s}))$ and $Q^{+1}(\cup_{s \in \cS_t} (L_{p_s}-L_{q_s}))$, where here we use the notation $Q^{+}(X)$ and $Q^{-}(X)$ to refer to the distribution of the random variable (or more generally random process) $X$ given $B=1$ and $B=-1$, respectively. But again from the data processing inequality, we have
\begin{align}
  &\text{TV}\Big(Q^{-}(\cup_{s \in \cS_t} (L_{p_s}-L_{q_s})) , Q^{+}(\cup_{s \in \cS_t} (L_{p_s}-L_{q_s}))\Big)\\ 
  &\leq \sqrt{D\Big(Q^{-}(\cup_{s \in \cS_t} (L_{p_s}-L_{q_s})) || Q^{+}(\cup_{s \in \cS_t} (L_{p_s}-L_{q_s}))\Big)}\\
  &\leq \sqrt{D\Big(Q^{-}\big(L[0:1]\big)|| Q^{+}\big(L[0:1]\big)\Big)},\label{eq:bounddd}
\end{align}
where in the last Equation, $Q^{\pm}(L[0:1])$ refers to the distribution corresponding to the whole sample path of the process $L$ in the interval $[0,1]$ and we used the fact that for all times $s$, $0 \leq q_s\leq p_s \leq 1$.

Next, we use the following Lemma, proved in Appendix~\ref{sec:gkl}, to upper bound the RHS in Equation~\eqref{eq:bounddd}. This Lemma provides a formula for the KL divergence of two shifted Brownian motions.
\begin{lemma}[KL divergence between shifted Brownian motions]\label{lem:brownianKL}
    For linearly shifted Brownian motions $L_t = \gamma t + B_t$, the KL divergence between the measures corresponding to $L_t$ and $B_t$ in the interval $[0,1]$ is equal to
    \begin{align*}
        D\Big(Q\big(L[0:1]\big) || Q\big(B[0:1]\big)\Big) = \gamma^2/2.
    \end{align*}
\end{lemma}

Applying Lemma~\ref{lem:brownianKL}, we get
\begin{align*}
    D\Big(Q^{-}\big(L[0:1]\big)|| Q^{+}\big(L[0:1]\big)\Big) \leq 2\gamma^2,
\end{align*}
Plugging this into Equation~\eqref{eq:bounddd}:
\begin{align*}
    \text{TV}\Big(Q^{-1}(\cup_{s \in \cS_t} (L_{p_s}-L_{q_s})) , Q^{+1}(\cup_{s \in \cS_t} (L_{p_s}-L_{q_s}))\Big)  \leq 
    2\gamma.
\end{align*}
Therefore, similar to the proof of Theorem~\ref{thm:indepsets},
\begin{align*}
    \E[\epsilon_t|x_t - x^*_t|] \geq \gamma \epsilon_t(1/2 - \gamma) = \frac{\gamma\epsilon_t}{4}.
\end{align*}
by picking $\gamma = 1/4$. Moreover, from Lemma~\ref{lem:epstlowerbound} we have $\epsilon_t \geq \frac{\mu_t}{\sqrt{2\pi}}$. Therefore, 
\begin{align*}
    \Reg(\cA) \geq \gamma \sum_{t=1}^T\frac{\mu_t}{4\sqrt{2\pi}} = \frac{\sum_t \mu_t}{16\sqrt{2\pi}} = \frac{\UB(\cS)}{16\sqrt{2\pi}}.
\end{align*}
\end{proof}

We briefly remark on the connection between Theorem \ref{thm:bmlb} and the independent set program of Section \ref{sec:independent}. Although we have presented Theorem \ref{thm:bmlb} in a completely self-contained way, we can view the construction in the proof of this theorem as constructing a feasible point to the independent set program.

Indeed, in the proof of the above theorem, we associate to each round $t \in [T]$ an interval $[p_t, q_t]$ contained within the unit interval. The intersection of all these intervals induces a partition of the unit interval into sub-intervals, each of which is labeled with a subset of rounds of $[T]$. In addition, by our construction of these intervals, two intervals $[p_s, q_s]$ and $[p_t, q_t]$ can intersect only if $s$ and $t$ are not adjacent in $\cS$. Therefore, each sub-interval is actually labeled with some \emph{independent set} $I$ belonging to $\Ii(\cS)$. Taking the weight $w_I$ to be the square root of the length of this interval, the analysis in the above proof implies that $w_I$ form a feasible solution to the independent set program with value equal to $\sum_{t} \mu_t = \UB(\cS)$.

\bibliographystyle{abbrvnat}
\bibliography{main}

\appendix

\section{The lower bound $\LB(\cS)$ is not tight}
\label{sec:ub_lb_gap}

In this appendix, we show that the gap between the value of the upper bound program $\UB(\cS)$ and the lower bound program $\LB(\cS)$ can grow without bound.

\sloppy{
\begin{theorem}\label{thm:ub_lb_gap}
For any $T$, there exists a temporal feedback graph $\cS$ on $T$ rounds where $\UB(\cS)/\LB(\cS) \geq \Omega(T^{1/4})$.
\end{theorem}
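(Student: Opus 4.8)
The plan is to exhibit, for each $T$, a single temporal feedback graph $\cS$ for which $\UB(\cS) = \Theta(T)$ while $\LB(\cS) = \Theta(\sqrt T)$. This is a purely convex-optimization statement about the two programs, so no probabilistic lower-bound machinery is needed: I only need to lower-bound the value of the dual program \eqref{eq:ub_dual_lp} defining $\UB$ (by exhibiting one feasible point) and upper-bound the value of the program \eqref{eq:lb_lp} defining $\LB$. The guiding intuition is that the constraints of the lower bound program are indexed by the \emph{neighborhoods} $S_{t'}$, whereas (by Theorem~\ref{thm:dual}) $\UB(\cS)$ equals the value of the dual program, whose constraints are indexed by the maximal \emph{orders} of $\cS$. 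So I want a graph with large neighborhoods but only very short orders: then the lower bound program is forced to spread its weight thinly across a whole large neighborhood, while each constraint of the upper bound dual only ever touches a few coordinates.

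Concretely, I would take the complete ``bidirected bipartite'' graph: assume $T = 2n$, split $[T]$ into two halves $A$ and $B$ of size $n$, and set $S_t = B$ for every $t \in A$ and $S_t = A$ for every $t \in B$ (for odd $T$ take the two sides to have sizes $\lfloor T/2\rfloor$ and $\lceil T/2\rceil$). This is a legitimate feedback graph against an oblivious adversary even though it is not acyclic; acyclicity is only relevant to the transitive subclass. The one structural fact to pin down first is that every order of $\cS$ has length at most $2$: in an order $(t_1, t_2, t_3)$ we would need $t_1, t_2 \in S_{t_3}$, forcing $t_1$ and $t_2$ onto the same side of the partition, but also $t_1 \in S_{t_2}$, forcing them onto opposite sides --- a contradiction. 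Hence the maximal orders of $\cS$ are exactly the pairs $\{a,b\}$ with $a \in A$, $b \in B$, and the dual program \eqref{eq:ub_dual_lp} has exactly the constraints $\mu_a^2 + \mu_b^2 \le 1$, one per such pair.

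Given this, both evaluations are short. For $\UB(\cS)$ it suffices to produce a feasible dual point: $\mu_t = 1/\sqrt2$ for all $t$ satisfies every pair constraint with equality and has objective $2n/\sqrt2 = \sqrt2\,n$, so $\UB(\cS) \ge \sqrt2\,n$. (Writing $\alpha = \max_{a\in A}\mu_a$ and $\beta = \max_{b\in B}\mu_b$ gives $\alpha^2+\beta^2\le 1$ and objective $\le n(\alpha+\beta) \le \sqrt2\,n$, so in fact $\UB(\cS) = \sqrt2\,n$.) For $\LB(\cS)$, the only distinct constraints in \eqref{eq:lb_lp} are $\sum_{a\in A}\eps_a^2 \le 1$ and $\sum_{b\in B}\eps_b^2 \le 1$, so by Cauchy--Schwarz any feasible $\eps$ satisfies $\sum_t \eps_t \le \sqrt{n}\sqrt{\sum_{a}\eps_a^2} + \sqrt{n}\sqrt{\sum_{b}\eps_b^2} \le 2\sqrt n$, i.e.\ $\LB(\cS) \le 2\sqrt n$. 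Dividing,
\[
\frac{\UB(\cS)}{\LB(\cS)} \;\ge\; \frac{\sqrt2\,n}{2\sqrt n} \;=\; \sqrt{n/2} \;=\; \Omega(\sqrt T),
\]
which is in particular $\Omega(T^{1/4})$, as claimed.

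There is no genuinely hard step; the one place that wants a line of care is the identification of the maximal orders of $\cS$ --- the claim that no order has length exceeding $2$ --- since the exact form of the dual constraints, and hence the $\Theta(T)$ lower bound on $\UB(\cS)$, rests on it. I would also note that we never had to exhibit a good primal decomposition for \eqref{eq:ub_lp}: a single feasible dual point suffices to certify $\UB(\cS)$ is large. A causal (``$S_t \subseteq [t-1]$'') example along the same lines is also possible with an extra layering idea, but the cyclic construction above is the cleanest route to the bound.
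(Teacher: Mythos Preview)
Your proof is correct and takes a genuinely different route from the paper's. The paper uses the batched-learning graph with $\sqrt{T}$ batches of size $\sqrt{T}$, which is transitive (hence acyclic): there the maximal orders have length $\sqrt{T}$, giving $\UB(\cS)\ge T^{3/4}$ via $\mu_t=T^{-1/4}$, while the single neighborhood $S_T$ already forces $\LB(\cS)\le 2\sqrt{T}$, for a ratio of $\Omega(T^{1/4})$. Your bidirected bipartite construction instead exploits the fact that the general definition of a temporal feedback graph allows cycles: by making every neighborhood of size $T/2$ while capping every order at length $2$, you push the gap all the way to $\Omega(\sqrt{T})$, which is strictly stronger than what the theorem asks for. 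The trade-off is that the paper's example lives in the transitive subclass featured in Section~\ref{sec:transitive} (and in applications like batching and delays), so it additionally shows that even within that ``nice'' class the naive lower bound $\LB$ is loose; your construction is cleaner and sharper but says nothing about the transitive case. Both arguments share the same skeleton---exhibit one feasible dual point for $\UB$ and use Cauchy--Schwarz on a large neighborhood to cap $\LB$---so neither is more technically demanding than the other.
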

\begin{proof}
Consider the feedback graph $\cS$ formed by batched learning setting where the time horizon $T$ is divided into $\sqrt{T}$ batches of $\sqrt{T}$ rounds each (so, $S_{t}$ only contains rounds $s < \lfloor t / \sqrt{T}\rfloor \sqrt{T}$).

Since each order in $\cS$ contains at most $\sqrt{T}$ rounds, one feasible solution for the upper bound dual program is to set $\mu_{t} = T^{-1/4}$ for all $t \in [T]$, which implies $\UB(\cS) \geq T^{3/4}$.

On the other hand, note that in the lower bound program, we have that $\sum_{t \in S_T}\eps_t^2 \leq 1$. Since $S_t$ contains at most $T$ elements, this implies that $\sum_{t \in S_T} \eps_t \leq \sqrt{T}$. But on the other hand, $S_T$ can see all the rounds except the rounds in the very last batch (of which there are at most $\sqrt{T}$). So $\sum_{t \not\in S_t} \eps_t \leq \sqrt{T}$, and therefore $\LB(\cS) \leq \sum_{t} \eps_t \leq 2\sqrt{T}$. 

It follows that $\UB(\cS)/\LB(\cS) \geq T^{1/4}/2 = \Omega(T^{1/4})$, as desired.
\end{proof}
}

\section{Lemmas from probability and information theory}
\label{sec:omitted}

In this appendix, we establish some standard results from probability and information theory that we make use of in our proofs of our lower bounds.

\subsection{Bound of KL-divergence for Bernoulli random variables}

\begin{lemma}\label{lem:kl-computation}
Fix a $\delta$ such that $0 \leq \delta \leq 1/4$. If $Q^{+} = \Bern(1/2 + \delta)$, $Q^{-} = \Bern(1/2 - \delta)$, then $D(Q^{-} \parallel Q^{+}) \leq 12\delta^2$.
\end{lemma}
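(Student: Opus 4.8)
The plan is to expand the KL divergence explicitly and bound it by comparing to the standard quadratic estimate. Writing $p = 1/2 + \delta$ and $q = 1/2 - \delta$, we have
\[
D(Q^- \parallel Q^+) = (1/2 - \delta)\ln\frac{1/2 - \delta}{1/2 + \delta} + (1/2 + \delta)\ln\frac{1/2 + \delta}{1/2 - \delta} = 2\delta \ln\frac{1/2 + \delta}{1/2 - \delta}.
\]
So the task reduces to bounding the single quantity $2\delta \ln\frac{1/2 + \delta}{1/2 - \delta} = 2\delta \ln\frac{1 + 2\delta}{1 - 2\delta}$. First I would substitute $u = 2\delta \in [0, 1/2]$, so the expression becomes $u \ln\frac{1+u}{1-u}$, and it suffices to show $u\ln\frac{1+u}{1-u} \le 3u^2$, i.e. $\ln\frac{1+u}{1-u} \le 3u$ for $u \in [0, 1/2]$.

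The key step is the elementary inequality $\ln\frac{1+u}{1-u} \le 3u$ on $[0,1/2]$. One clean way: use the series $\ln\frac{1+u}{1-u} = 2\sum_{k\ge 0} \frac{u^{2k+1}}{2k+1} = 2u + \frac{2u^3}{3} + \frac{2u^5}{5} + \cdots$, which for $u \le 1/2$ is bounded by $2u \sum_{k \ge 0} u^{2k} = \frac{2u}{1-u^2} \le \frac{2u}{1 - 1/4} = \frac{8u}{3} \le 3u$. Multiplying through by $u$ gives $u\ln\frac{1+u}{1-u} \le \frac{8}{3}u^2 \le 3u^2$, hence $D(Q^-\parallel Q^+) \le 3(2\delta)^2/... $ — tracking constants carefully, with $u = 2\delta$ we get $D(Q^-\parallel Q^+) = u\ln\frac{1+u}{1-u} \le \frac{8}{3}u^2 = \frac{32}{3}\delta^2 \le 12\delta^2$, as claimed. (The hypothesis $\delta \le 1/4$ is exactly what makes $u \le 1/2$ and controls the geometric series.)

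I do not expect any genuine obstacle here; the only mild care needed is bookkeeping the factor-of-two conversions between $\delta$ and $u$ and confirming the final numerical constant stays below $12$ (it does, with room to spare). An alternative to the power-series bound, if one prefers to avoid infinite sums, is to set $f(u) = 3u - \ln(1+u) + \ln(1-u)$, note $f(0) = 0$, and check $f'(u) = 3 - \frac{1}{1+u} - \frac{1}{1-u} = 3 - \frac{2}{1-u^2} \ge 3 - \frac{2}{1 - 1/4} = 3 - \frac83 > 0$ on $[0,1/2]$, so $f \ge 0$ throughout; this gives the same inequality $\ln\frac{1+u}{1-u}\le 3u$ and finishes the proof identically.
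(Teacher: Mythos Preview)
Your proposal is correct and follows essentially the same approach as the paper's proof: both expand the KL divergence to $2\delta\ln\frac{1+2\delta}{1-2\delta}$ and then invoke the elementary inequality $\ln\frac{1+u}{1-u}\le 3u$ on $[0,1/2]$ (with $u=2\delta$). You additionally supply two justifications for that inequality (power series and a derivative argument) and track a slightly sharper constant $\tfrac{32}{3}\delta^2$, whereas the paper simply asserts the inequality and uses the looser $12\delta^2$.
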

\begin{proof}
Computing the KL divergence explicitly, we have that:

\begin{eqnarray*}
D(Q^{+} \parallel Q^{-}) &=& \left(\frac{1}{2} - \delta\right)\log\frac{\frac{1}{2} - \delta}{\frac{1}{2} + \delta} + \left(\frac{1}{2} + \delta\right)\log\frac{\frac{1}{2} + \delta}{\frac{1}{2} - \delta}\\
&=& 2\delta(\log(1 + 2\delta) - \log(1 - 2\delta))\\
&\leq & 2\delta \cdot (6\delta) = 12\delta^2.
\end{eqnarray*}

Here we have used the fact that $\log(1 + x) - \log(1 - x) \leq 3x$ for $x \in [0, 1/2]$. 
\end{proof}

\subsection{From Gaussian to biased Bernoulli}
\begin{lemma}\label{lem:epstlowerbound}
        Given a Gaussian variable $Y \sim \mathcal N(c,1)$ with $c > 0$, and Bernoulli variable $X = \mathrm 1(Z\geq 0)$, we have 
        \begin{align*}
            \mathbb P(X = 1) \geq 1/2 + c/(2\pi).
        \end{align*}
    \end{lemma}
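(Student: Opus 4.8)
The claim is an elementary fact about the standard normal CDF: if $Y \sim \mathcal N(c,1)$ with $c > 0$, then $\mathbb P(Y \geq 0) \geq 1/2 + c/(2\pi)$. The plan is to write $\mathbb P(Y \geq 0) = \mathbb P(Z \geq -c)$ where $Z \sim \mathcal N(0,1)$, so that $\mathbb P(Y \geq 0) = 1/2 + \int_0^c \varphi(u)\,du$, where $\varphi(u) = \frac{1}{\sqrt{2\pi}}e^{-u^2/2}$ is the standard normal density. It then suffices to lower bound $\int_0^c \varphi(u)\,du$.

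The key step is to note that $\varphi$ is decreasing on $[0,\infty)$, so $\int_0^c \varphi(u)\,du \geq c\cdot \varphi(c)$; however, this bound degrades for large $c$, so instead I would argue more carefully. One clean route: since $\varphi(u) \geq \varphi(u) e^{-u^2/2}$ is not quite what we want, the simplest self-contained bound is to observe that it suffices to prove the inequality for $c$ in a bounded range and handle large $c$ separately. For $c \geq \pi$ (say), the right-hand side $1/2 + c/(2\pi) \geq 1$, so we only need $c$ small — actually $1/2 + c/(2\pi) \geq 1$ already when $c \geq \pi$, and $\mathbb P(Y\geq 0) \leq 1$, so the statement would be false for such $c$ unless $\mathbb P(Y \geq 0) = 1$; hence the useful regime is $c \leq \pi$ and for $c > \pi$ one just checks the bound is vacuous or re-examines — in fact for the application only small $c$ (of order $\gamma \eps_t$) matters, so I would restrict attention to, say, $c \in [0, 1]$ or note the inequality is only invoked for small $c$. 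Within the regime $c \leq \pi$, I use $\int_0^c \varphi(u)\,du \geq \int_0^c \varphi(\pi)\,du$? No — better: use the bound $\varphi(u) \geq \frac{1}{\sqrt{2\pi}}(1 - u^2/2) \cdot \mathbf 1$ is awkward too. The cleanest is: $\int_0^c \varphi(u)\, du \geq c \cdot \inf_{u \in [0,c]} \varphi(u) = c\,\varphi(c)$, and then verify $\varphi(c) \geq 1/(2\pi)$, i.e., $\frac{1}{\sqrt{2\pi}} e^{-c^2/2} \geq \frac{1}{2\pi}$, i.e., $e^{-c^2/2} \geq \frac{1}{\sqrt{2\pi}}$, i.e., $c^2 \leq \log(2\pi) \approx 1.838$, i.e., $c \leq 1.355$. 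So the inequality as literally stated needs a restriction like $c \leq 1.35$; since in the paper's usage $c = \gamma\eps_t$ with $\gamma$ a small constant and $\eps_t$ bounded, this holds.

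So the structure of the proof I would write is: (1) reduce to $\mathbb P(Y \geq 0) = 1/2 + \int_0^c \varphi(u)\,du$ via translation; (2) since $\varphi$ is decreasing on $[0,c]$, bound $\int_0^c \varphi(u)\,du \geq c\,\varphi(c)$; (3) check $\varphi(c) \geq 1/(2\pi)$ for $c$ in the relevant range (equivalently $c^2 \leq \log(2\pi)$), which holds for the small values of $c$ arising in the lower bound arguments. The main (minor) obstacle is just bookkeeping the range of validity of $c$ — the inequality with constant $1/(2\pi)$ cannot hold for all $c > 0$ since the right side is unbounded while the left side is at most $1$ — so the statement should be read (and is used) with $c$ small, and the proof should make the harmless restriction explicit. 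Alternatively, one can replace step (2) by the sharper observation $\int_0^c \varphi(u)\,du \geq \varphi(0)\cdot(\text{something})$ is not available since $\varphi$ decreases; if one wants a fully unconditional statement, replace $c/(2\pi)$ by $\min(c,1)/(2\pi)$ or use $\Phi(c) - 1/2 \geq \frac{c}{\sqrt{2\pi}}e^{-c^2/2} \geq \frac{1}{2\pi}\min(c,1)$ after checking the last step, but I expect the paper intends the small-$c$ reading.
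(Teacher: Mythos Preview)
Your approach is essentially the same as the paper's: write $\mathbb P(Y\geq 0)=\tfrac12+\int_0^c \varphi(u)\,du$, lower bound the integrand by its value at the right endpoint (the paper writes this as $e^{-(z-c)^2/2}\geq e^{-c^2/2}$ for $z\in[0,c]$), obtain $\tfrac12+\tfrac{c}{\sqrt{2\pi}}e^{-c^2/2}$, and then check $e^{-c^2/2}\geq 1/\sqrt{2\pi}$ (the paper inserts the intermediate step $e^{-c^2/2}\geq 1-c^2/2$ before this comparison). You are also right that the stated inequality cannot hold for all $c>0$ and requires $c$ bounded (roughly $c\lesssim 1$); the paper's proof implicitly relies on the same restriction in its final step without making it explicit, and in the applications $c=\gamma\mu_t$ with $\gamma=1/4$ and $\mu_t\leq 1$, so this is harmless.
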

\begin{proof}
    Note that
    \begin{align*}
        \mathbb P(Z \geq 0) &= \frac{1}{2} + \int_{0}^c \frac{1}{\sqrt{2\pi}}e^{-(z-c)^2/2}\\
        &\geq \frac{1}{2} + \int_{0}^c \frac{1}{\sqrt{2\pi}}e^{-c^2/2}\\
        &\geq \frac{1}{2} + \frac{c}{\sqrt{2\pi}}(1-c^2/2)\\
        &\geq \frac{1}{2} + \frac{c}{2\pi}.
    \end{align*}
    This completes the proof.
\end{proof}

\subsection{KL divergence in Gaussian processes}\label{sec:gkl}
\begin{lemma}[KL divergence between two Gaussians]
    \begin{align*}
        D(\mathcal N(\mu_1, 1) || \mathcal N(\mu_2, 1))
        = (\mu_1 - \mu_2)^2/2.
    \end{align*}
\end{lemma}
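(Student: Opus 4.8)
The plan is to compute the divergence directly from its definition. Recall that for probability measures $P,Q$ with densities $p,q$, $D(P\|Q) = \E_{X\sim P}\big[\log(p(X)/q(X))\big]$, where crucially the expectation is taken under the first argument $P$. Here $P = \mathcal N(\mu_1,1)$ has density $p(x) = \tfrac{1}{\sqrt{2\pi}}e^{-(x-\mu_1)^2/2}$ and $Q = \mathcal N(\mu_2,1)$ has density $q(x) = \tfrac{1}{\sqrt{2\pi}}e^{-(x-\mu_2)^2/2}$. The normalizing constants $1/\sqrt{2\pi}$ are identical and cancel in the ratio, so the first step is to observe that $\log(p(x)/q(x)) = \tfrac12\big((x-\mu_2)^2 - (x-\mu_1)^2\big)$.

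Next I would expand this quadratic in $x$. The $x^2$ terms cancel, leaving an expression that is affine in $x$: $\log(p(x)/q(x)) = (\mu_1-\mu_2)x + \tfrac12(\mu_2^2 - \mu_1^2)$. Then I take the expectation over $X\sim \mathcal N(\mu_1,1)$, using only $\E[X] = \mu_1$, which yields $(\mu_1-\mu_2)\mu_1 + \tfrac12(\mu_2^2 - \mu_1^2) = \mu_1^2 - \mu_1\mu_2 + \tfrac12\mu_2^2 - \tfrac12\mu_1^2 = \tfrac12(\mu_1^2 - 2\mu_1\mu_2 + \mu_2^2) = \tfrac12(\mu_1-\mu_2)^2$, which is exactly the claimed value.

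A slightly cleaner variant, which I might use to streamline the write-up, is to substitute $x = \mu_1 + z$ so that $z$ is standard normal under $P$; then $\log(p/q) = \tfrac12\big((z + \mu_1 - \mu_2)^2 - z^2\big) = (\mu_1 - \mu_2)z + \tfrac12(\mu_1-\mu_2)^2$, and since $\E[z] = 0$ the linear term integrates to zero and only the constant $\tfrac12(\mu_1-\mu_2)^2$ remains. There is no real obstacle in this lemma — it is a one-line computation — so the only thing to be careful about is the bookkeeping: correctly expanding the quadratic (in particular the sign, since it is $(x-\mu_2)^2 - (x-\mu_1)^2$, coming from $-\log q$ minus $-\log p$ appearing as $\log p - \log q$) and remembering that the averaging measure is $\mathcal N(\mu_1,1)$, not $\mathcal N(\mu_2,1)$ or the standard normal.
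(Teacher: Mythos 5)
Your proposal is correct and matches the paper's proof exactly: both compute $D(\mathcal N(\mu_1,1)\,\|\,\mathcal N(\mu_2,1))$ directly as $\E_{\mathcal N(\mu_1,1)}\bigl[\tfrac12\bigl((y-\mu_2)^2-(y-\mu_1)^2\bigr)\bigr]$ and simplify to $(\mu_1-\mu_2)^2/2$. Your write-up is in fact slightly more detailed than the paper's, which skips the intermediate expansion.
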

    \begin{proof}
        We can write
        \begin{align*}
            D(\mathcal N(\mu_1, 1) || \mathcal N(\mu_2, 1))
            &= \mathbb E_{N(\mu_1, 1)} \ln(\frac{e^{-(y-\mu_1)^2/2}}{e^{-(y-\mu_2)^2/2}})\\
            &= \mathbb E_{N(\mu_1, 1)} ((y-\mu_2)^2/2 - (y-\mu_1)^2/2)\\
            &= (\mu_1 - \mu_2)^2/2.
        \end{align*}
    \end{proof}

\begin{lemma}[Restatement of Lemma~\ref{lem:brownianKL}]
    For a linearly shifted Brownian motions $L_t = \gamma t + B_t$, the KL divergence between the measures corresponding to $L_t$ and $B_t$ in the interval $[0,1]$ is equal to
    \begin{align*}
        D\Big(Q\big(L[0:1]\big) || Q\big(B[0:1]\big)\Big) = \gamma^2/2.
    \end{align*}
\end{lemma}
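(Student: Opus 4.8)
\textbf{Proof proposal for Lemma~\ref{lem:brownianKL}.}

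The plan is to reduce the infinite-dimensional statement to a finite-dimensional Gaussian computation by discretizing the interval $[0,1]$ and then taking a limit, using the fact that the law of a Brownian motion on $[0,1]$ is determined by its finite-dimensional marginals. Concretely, fix $n$ and let $t_i = i/n$ for $i = 0, 1, \dots, n$. The increments $L_{t_i} - L_{t_{i-1}}$ are independent Gaussians $\mathcal N(\gamma/n, 1/n)$ under $Q(L)$, while the increments $B_{t_i} - B_{t_{i-1}}$ are independent $\mathcal N(0, 1/n)$ under $Q(B)$; both processes are Markov, so the joint law of $(L_{t_1}, \dots, L_{t_n})$ factors as a product over increments, and likewise for $B$. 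By the chain rule (tensorization) of KL divergence over independent coordinates,
\begin{align*}
D\Big(Q\big(L_{t_1}, \dots, L_{t_n}\big) \,\big\|\, Q\big(B_{t_1}, \dots, B_{t_n}\big)\Big) = \sum_{i=1}^{n} D\Big(\mathcal N(\gamma/n, 1/n) \,\big\|\, \mathcal N(0, 1/n)\Big).
\end{align*}
Rescaling variances (KL is invariant under the common affine change of variables $x \mapsto x/\sqrt{1/n}$), each term equals $D\big(\mathcal N(\gamma/\sqrt n, 1)\,\|\,\mathcal N(0,1)\big) = (\gamma/\sqrt n)^2/2 = \gamma^2/(2n)$ by the preceding lemma on KL between unit-variance Gaussians. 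Summing the $n$ identical terms gives exactly $\gamma^2/2$, independent of $n$.

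The final step is to pass from the finite marginals to the full path measures on $C[0,1]$ (or on the appropriate path space). This follows because the $\sigma$-algebra on path space is generated by the cylinder sets, i.e. by the finite-dimensional evaluation maps, so the dyadic (or uniform) finite-dimensional marginals form a filtration $\mathcal F_n \uparrow \mathcal F$ generating the whole $\sigma$-algebra; by the martingale-convergence / monotonicity property of relative entropy under an increasing family of sub-$\sigma$-algebras, $D\big(Q(L[0:1]) \,\|\, Q(B[0:1])\big) = \sup_n D\big(Q(L|_{\mathcal F_n}) \,\|\, Q(B|_{\mathcal F_n})\big) = \lim_n \gamma^2/2 = \gamma^2/2$. (Equivalently, one can note that $Q(L)$ and $Q(B)$ are mutually absolutely continuous on $[0,1]$ with Radon–Nikodym derivative given by the Girsanov/Cameron–Martin formula $\frac{dQ(L)}{dQ(B)} = \exp(\gamma B_1 - \gamma^2/2)$, and compute $D = \mathbb E_{Q(L)}[\log \tfrac{dQ(L)}{dQ(B)}] = \mathbb E_{Q(L)}[\gamma B_1^{L} - \gamma^2/2]$ where $B_1^L = L_1 - \gamma$ is a standard Gaussian under $Q(L)$, yielding $\gamma \cdot 0 - \gamma^2/2 + \gamma^2 = \gamma^2/2$ — wait, more carefully, under $Q(L)$ the canonical process has drift $\gamma$, so $B_1 = L_1$ is $\mathcal N(\gamma, 1)$ and $\mathbb E[\gamma L_1 - \gamma^2/2] = \gamma^2 - \gamma^2/2 = \gamma^2/2$.)

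I expect the only real subtlety to be the limiting argument in the last step — justifying that the supremum of the KL divergences over finite-dimensional marginals equals the KL divergence of the full path measures. This is a standard fact (relative entropy is the supremum of its restrictions to an increasing sequence of $\sigma$-algebras generating the whole one), but it is the one place where one must be careful rather than just computing; the cleanest route is to invoke the Girsanov/Cameron–Martin density directly, which sidesteps the limit entirely and makes the computation a one-line expectation. Everything else — the marginal law of the increments, tensorization of KL, and the unit-variance Gaussian KL formula — is routine and already available in the excerpt.
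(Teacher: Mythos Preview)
Your proposal is correct. Your primary route---discretizing into $n$ independent Gaussian increments, summing the per-increment KL to get $\gamma^2/2$ exactly for every $n$, and then invoking the monotone-convergence property of relative entropy along an increasing filtration generating the path $\sigma$-algebra---is genuinely different from the paper's proof, which simply invokes Girsanov's theorem to write down the Radon--Nikodym derivative and takes a single expectation. (Your parenthetical alternative at the end is essentially the paper's argument, modulo which direction the density is written in.) The discretization approach buys you an elementary proof that uses only the finite-dimensional Gaussian KL lemma already stated in the appendix and avoids stochastic calculus entirely; the paper's Girsanov route is shorter but presupposes that machinery. Either is fine here. If you keep the discretization version, I would drop the stream-of-consciousness ``wait, more carefully'' aside and either commit to the limit argument cleanly or replace it outright with the one-line Cameron--Martin computation.
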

\begin{proof}
    From the Girsanov theorem applied to the exponential martingale of the process $\gamma B_t$, we have
    \begin{align*}
         \frac{dQ\big(B[0:1]\big)}{dQ\big(L[0:1]\big)} = e^{\gamma\int_0^1 dB_t - \frac{1}{2}\int_0^1 \gamma^2 dt}, 
    \end{align*}
   which implies
   \begin{align*}
       D\Big(Q\big(L[0:1]\big) || Q\big(B[0:1]\big)\Big) = \mathbb E_{B[0:1]} \Big(\gamma\int_0^1 dB_t - \frac{1}{2}\int_0^1 \gamma^2 dt\Big) = \frac{\gamma^2}{2}.
   \end{align*}
\end{proof}

\section{Omitted proofs}
\label{sec:omitted-main}

\subsection{Proof of Theorem~\ref{thm:dual}}
\begin{proof}
We will show that the upper bound dual program arises from Lagrangifying the original upper bound program (and thus this equality follows as a consequence of strong duality).

We will begin by weakening the upper bound program \eqref{eq:ub_lp} by replacing the strict equality $\sum_{c=1}^{N} \lambda_{c, t} = 1$ with the weak inequality $\sum_{c=1}^{N} \lambda_{c, t} \geq 1$. Note that this does not change the optimal value of the LP (as if this inequality is strict for a specific $t$, one can always improve the objective by decreasing one of the non-zero $\lambda_{c, t}$ while not altering any of the other constraints). By Lagrangifying these constraints, we have that

\begin{equation}\label{eq:duality1}
\UB(\cS) = \min_{\blambda \geq 0} \max_{\bmu \geq 0} \left[\sum_{c=1}^{N} \sqrt{\sum_{t \in C_c} \lambda_{c, t}^2} + \sum_{t=1}^{T} \mu_t \left(1 - \sum_{c=1}^{N} \lambda_{c, t}\right)\right].    
\end{equation}

Now, note that in this convex program the objective is convex, all the constraints are affine, and the program is always feasible (for every round $t$ there is at least one order containing it, namely the singleton order $\{t\}$). Therefore we can apply the theorem of strong duality (see Chapter 28 of \cite{rockafellar1970convex}), and interchange the order of minimum and maximum in \eqref{eq:duality1}.

\begin{equation}\label{eq:duality2}
\UB(\cS) = \max_{\bmu \geq 0}\min_{\blambda \geq 0} \left[\sum_{c=1}^{N} \sqrt{\sum_{t \in C_c} \lambda_{c, t}^2} + \sum_{t=1}^{T} \mu_t \left(1 - \sum_{c=1}^{N} \lambda_{c, t}\right)\right].    
\end{equation}

\noindent
We can in turn rewrite \eqref{eq:duality2} in the following form:

\begin{equation}\label{eq:duality3}
\UB(\cS) = \max_{\bmu \geq 0}\left[\left(\sum_{t=1}^{T}\mu_t\right) + \min_{\blambda \geq 0} \sum_{c=1}^{N} \left(\sqrt{\sum_{t \in C_c} \lambda_{c, t}^2} - \sum_{t \in C_c} \lambda_{c, t}\mu_t\right)\right].    
\end{equation}

\noindent
Note that the terms in the internal sum pertaining to $C_c$ only depend on the variables $\lambda_{c, t}$. We can therefore interchange the order of this sum and min in \eqref{eq:duality3} and obtain:

\begin{equation}\label{eq:duality4}
\UB(\cS) = \max_{\bmu \geq 0}\left[\left(\sum_{t=1}^{T}\mu_t\right) + \sum_{c=1}^{N} \min_{\blambda_c \geq 0} \left(\sqrt{\sum_{t \in C_c} \lambda_{c, t}^2} - \sum_{t \in C_c} \lambda_{c, t}\mu_t\right)\right]
\end{equation}

\noindent
(where $\blambda_p$ represents the $|C_c|$ variables of the form $\lambda_{c, t}$). 

Let us now consider each of these inner optimization problems of the form 

$$\min_{\blambda_c \geq 0} \left(\sqrt{\sum_{t \in C_c} \lambda_{c, t}^2} - \sum_{t \in C_c} \lambda_{c, t}\mu_t\right).$$

Note that if we restrict the domain of $\blambda_c$ to the subdomain where $\sum_{t \in C_c} \lambda_{c, t}^2 = R^2$, this expression is minimized when each $\lambda_{c, t}$ is proportional to $\mu_t$. Specifically, it is minimized when

$$\lambda_{c, t} = \frac{\mu_{t}}{\sqrt{\sum_{t' \in C_c} \mu_{t'}^2}}\cdot R,$$

\noindent
at which point the expression has value equal to

$$R\left(1 - \sqrt{\sum_{t \in C_c}\mu_t^2}\right).$$

\noindent
We therefore have two cases depending on our choice of $\bmu$:

\begin{itemize}
    \item If $\sum_{t \in C_c}\mu_t^2 > 1$, then the value of this inner minimization problem is $-\infty$ (we can set $R$ arbitrarily large). 
    \item Otherwise, the value of this inner minimization problem is $0$, attained when $R = 0$.
\end{itemize}

It follows that any feasible $\bmu$ (that causes the outer-maximization problem to have finite value) must satisfy $\sum_{t \in C_c} \mu_t^2 \leq 1$ for each $c \in [N]$. If $\bmu$ is feasible, then the value of the inner expression is just $\sum_{t=1}^{T} \mu_t$. But note that this is precisely a description of the upper bound dual program \eqref{eq:ub_dual_lp}. The result follows.
\end{proof}

\subsection{Proof of Lemma \ref{lem:complementary-slackness}}
\begin{proof}
As in the proof of Theorem \ref{thm:dual}, we write the Lagrangified objective:

$$\cL(\blambda, \bmu) = \sum_{c=1}^{N} \sqrt{\sum_{t \in C_c} \lambda_{c, t}^2} + \sum_{t=1}^{T} \mu_t \left(1 - \sum_{c=1}^{N} \lambda_{c, t}\right) = \sum_{t=1}^{T} \mu_t + \sum_{c=1}^{N}\left(\sqrt{\sum_{t\in C_c} \lambda_{c, t}^2} - \sum_{t \in C_c}\mu_{t}\lambda_{c, t}\right).$$

From the properties of strong duality, we know that any optimal primal solution $\blambda^*$ must satisfy $\cL(\blambda^*, \bmu^*) \leq \cL(\blambda, \bmu^*)$ for any other $\blambda \geq 0$ (not necessarily feasible). Assume to the contrary that the inequality \eqref{eq:complementary-slackness} holds but we have that $\lambda^*_{c, t} > 0$ for some $t$. Then by Cauchy-Schwartz, we have that

$$\sum_{t \in C_c}\mu_{t}\lambda_{c, t} \leq \sqrt{\sum_{t\in C_c} \mu_{t}^2} \cdot \sqrt{\sum_{t\in C_c} \lambda_{c,t}^2} < \sqrt{\sum_{t\in C_c} \lambda_{c,t}^2}.$$

It follows that if we construct $\blambda'$ by taking $\blambda^*$ and setting $\lambda'_{c, t} = 0$ for all $t \in [T]$, then $\cL(\blambda^*, \bmu^*) > \cL(\blambda', \bmu^*)$, contradicting our assumption. Similarly, if the equality \eqref{eq:complementary-slackness2} holds but $\lambda^{*}_{c, t}$ is not proportional to $\mu_t$, then if we set $\lambda'_{c, t} = \mu_t \cdot \sqrt{\left(\sum_{t' \in C_c} \lambda_{t', c}^2\right) / \left(\sum_{t' \in C_c} \mu_{t'}^2\right)}$, we again find that $\cL(\blambda^*, \mu^*) > \cL(\blambda', \mu^*)$, contradicting our assumption.
\end{proof}

\subsection{Proof of Theorem \ref{thm:lbone}}
\begin{proof}
Let $\eps_t$ be an optimal solution to the lower bound program \eqref{eq:lb_lp}. Set $\gamma = 1/10$, and consider the distribution over loss vectors $\bell$ induced by the process described above (where the adversary first selects $B$ uniformly from $\{-1, 1\}$, selects each $X_t$ independently from $\Bern(1/2 + B\gamma\eps_t)$, and sets $\ell_t = (X_t, 1/2)$). We will show that when faced with a loss vector $\bell$ sampled from this process, any $\cS$-learning algorithm $\cA$ must incur $\Omega(\LB(\cS))$ expected regret (from which it follows that there exists a specific loss vector $\bell$ on which it incurs this much regret).

To prove this, note that when $\cA$ is deciding the action $x_t$ to take at time $t$, it can see the variables $X_{s}$ for $s \in S_t$ -- we will denote this set of random variables as $X_{S_t}$. If $B = -1$, these r.v.s are distributed according to a distribution $Q^{-}_{S_t}$, and if $B = 1$, these r.v.s are distribution according to a different distribution $Q^{+}_{S_t}$. We will upper bound the KL divergence $D(Q^{-}_{S_t} \parallel Q^{+}_{S_t})$; this in turn will allow us to apply Pinsker's inequality to upper bound the probability $\cA$ can successfully distinguish between the cases $B = -1$ and $B = 1$, from which we can lower bound the regret incurred by $\cA$.

Since $Q^{-}_{S_t}$ is a product distribution over $|S_t|$ independent Bernoulli r.v.s (and likewise for $Q^{+}_{S_t}$), by the chain rule for KL divergence we have that

$$D(Q^{-}_{S_t} \parallel Q^{+}_{S_t}) = \sum_{s \in S_t} D(Q^{-}_{s} \parallel Q^{+}_s),$$

\noindent
where $Q^{-}_{s}$ is simply the Bernoulli distribution $\Bern(1/2 - \gamma\eps_s)$ and $Q^{+}_s$ is the oppositely biased Bernoulli distribution $\Bern(1/2 + \gamma\eps_s)$. We can compute that for these distributions, $D(Q^{-}_{s} \parallel Q^{+}_s) \leq 12\gamma^2\eps_s^2$ (see Lemma \ref{lem:kl-computation} in the Appendix), so it follows that $D(Q^{-}_{S_t} \parallel Q^{+}_{S_t}) \leq \sum_{s \in S_t} 12\gamma^2\eps_s^2$, which in turn is at most $0.12$ (since $\gamma = 0.1$ and $\eps_s$ satisfy \eqref{eq:lb_lp}).

By Pinsker's inequality, the total variation distance between $Q^{-}_{S_t}$ and $Q^{+}_{S_t}$ is therefore at most 

$$\sqrt{(1/2)D(Q^{-}_{S_t} \parallel Q^{+}_{S_t})} \leq \sqrt{0.06} \leq 0.3.$$ 

Now, consider the probability $x_t \in [0, 1]$ that $\cA$ plays the first action at round $t$. Let $x^*_t = 0$ if $B = 1$ and $x^*_t = 1$ if $B = -1$. Then the expected regret $\cA$ incurs in this round against this adversary is $(X_t - 1/2)(x_t - x^*_t)$. In expectation, this is at least $\gamma\eps_{t} \E[|x_t - x^*_t|]$. But (applying Le Cam's method), $\E[|x_t - x^*_t|] = (1/2)\E[x_t \mid B = 1] + (1/2)\E[1 - x_t \mid B = -1] = (1/2) + (1/2)(\E[x_t \mid B = 1] - \E[x_t \mid B = -1])$. Since $x_t$ depends solely on $X_{S_t}$ (which are drawn from $Q^{-}_{S_t}$ when $B = 1$ and from $Q^{+}_{S_t}$ when $B = -1$), this second term is at most the total variation distance in magnitude, and thus $\E[|x_t - x^*_t|] \geq 0.5 - 0.5 \cdot 0.3 \geq 0.3$. 

It follows that the expected regret $\cA$ incurs this round is at least $0.3\gamma\eps_t \geq \eps_t/100$. Over all rounds, the expected regret $\cA$ incurs is therefore at least $\sum_{t}\eps_t/100 = \LB(\cS) / 100$, as desired.
\end{proof}

\subsection{Proof of Theorem~\ref{thm:indepsets}}
\begin{proof}
    The basic idea is similar to the proof of Theorem~\ref{thm:lbone}; the adversary again begins by sampling uniformly a random bit $B\in \{-1,+1\}$. For each independent set $I\in \Ii(\cS)$, the adversary samples a Gaussian random variable $Y_{I} \sim \mathcal N(\gamma Bw_{I}^2, \gamma w_{I}^2)$ (for some fixed $\gamma > 0$ to be decided later). Then, for each time $t$, she defines the Gaussian random variable $Z_t$ as the sum of Gaussian random variables $Y_I$ of the independent sets $I$ that include $t$ (letting $\Ii_t(\cS)$ denoting this sub-collection of independent sets):
    \begin{align*}
        Z_t = \sum_{I \in \Ii_t(\cS)} Y_I.
    \end{align*}
    Note that the mean and variance of $Z_t$ is equal to the sum of the means and variances of the $Y_{I}$ r.v.s, which are $\gamma B\sum_{I \in \Ii_t(\cS)}w_{I}^2$ and $\sum_{I \in \Ii_t(\cS)}w_{I}^2$, respectively. Finally, the adversary defines the loss at time $t$ as $\ell_t = (X_t, \frac{1}{2})$, where $X_t = \mathrm{1}(Z_t \geq 0)$ is a Bernoulli random variable with some bias $\gamma\epsilon_t$.

    By Lemma \ref{lem:epstlowerbound} (proved in Appendix~\ref{sec:omitted}), we can bound $\epsilon_t \geq \frac{1}{\sqrt{2\pi}} \cdot \sqrt{\sum_{I \in \Ii_t(\cS)}w_{I}^2}$ from our mean / variance calculations for $Y_i$.
    
    Let $Q^-_t$ and $Q^+_t$ be the distribution of $X_t$ given $B = -1$ and $B = +1$, respectively. For a subset of times $S \subseteq [T]$, let $Q^-_S$ and $Q^+_S$ be the product of distributions $Q^-_t$ and $Q^+_t$ for all $t \in S$, respectively.  
    The first component of the proof, similar to the proof of Theorem~\ref{thm:lbone}, is to bound the KL divergence $D(Q^-_{\cS_t} \parallel Q^+_{\cS_t})$. But by the data processing inequality, we can relate it to the distribution of the $Y_{I}$:
    \begin{align*}
        D(Q^-_{\cS_t} \parallel Q^+_{\cS_t}) \leq 
        D\left(\bigotimes_{\cS_t \cap I \neq \emptyset} Q^{-1}(I) \parallel \bigotimes_{\cS_t \cap I \neq \emptyset} Q^{+1}(I)\right),
    \end{align*}
    where $Q^{-1}(I)$ and $Q^{+1}(I)$ refer to the distribution of $Y_I$ given $B=-1$ and $B=1$, respectively.
    Now from the independence of the $Y_I$,
    \begin{align*}
        D\left(\bigotimes_{\cS_t \cap I \neq \emptyset} Q^{-1}(I) \parallel \bigotimes_{\cS_t \cap I \neq \emptyset} Q^{+1}(I)\right) = \sum_{\cS_t \cap I \neq \emptyset} D(Q^{-1}(I) \parallel Q^{+1}(I)).
    \end{align*}
    But defining $P^-_s$ and $P^+_s$ to be the distributions of $Z_s$ given $B=1$ and $B=-1$, the data processing inequality implies
    \begin{align*}
       D(Q^-(I) \parallel Q^+(I)) \leq D(P^-(I) \parallel P^+(I)) = 4\gamma^2 w_{I}^2,
    \end{align*}
    where the last equality follows from the formula for the KL divergence between two Gaussians, as we state in Appendix~\ref{sec:omitted}. 
    Hence,
    \begin{align*}
        D\left(\bigotimes_{\cS_t \cap I \neq \emptyset} Q^{-1}(I) \parallel \bigotimes_{\cS_t \cap I \neq \emptyset} Q^{+1}(I)\right) \leq 4\gamma^2\sum_{\cS_t \cap I \neq \emptyset} w_{I}^2.
    \end{align*}

Therefore, due to Pinsker's inequality, the total variation distance between $\bigotimes_{\cS_t \cap I \neq \emptyset} Q^{-1}(I)$ and $\bigotimes_{\cS_t \cap I \neq \emptyset} Q^{+1}(I)$ is at most 
\begin{align*}
    \text{TV}\Big(\bigotimes_{\cS_t \cap I \neq \emptyset} Q^{-1}(I) \parallel \bigotimes_{\cS_t \cap I \neq \emptyset} Q^{+1}(I)\Big) \leq 2\gamma\sqrt{ \sum_{\cS_t \cap I \neq \emptyset} w_{I}^2} \leq 2\gamma.
\end{align*}
where the last equality follows from feasibility of $w$ for $\ILB(\cS)$. 
Then the regret incured by $\cA$ is $(X_t - \frac{1}{2})(x_t - x^*_t)$ with expectation $\epsilon_t |x_t - x^*_t|$. But by  Le Cam's method, since $x_t$ is only a function of $Y_{I}$,
\begin{align*}
    \E[|x_t - x^*_t|] &= (1/2)\E[x_t \mid B = 1] + (1/2)\E[1 - x_t \mid B = -1] \\
    &= (1/2) + (1/2)(\E[x_t \mid B = 1] - \E[x_t \mid B = -1])  \\
    &\geq (1/2) - (1/2)\text{TV}\Big(\bigotimes_{\cS_t \cap I \neq \emptyset} Q^{-1}(I) \parallel \bigotimes_{\cS_t \cap I \neq \emptyset} Q^{+1}(I)\Big) \geq (1/2) - \gamma.
\end{align*}
Therefore, picking $\gamma = 1/4$, we get $\E[|x_t - x^*_t|] \geq 1/4$, which implies the expected regret at round $t$ is at least $\E[\epsilon_t |x_t - x^*_t|] \geq \epsilon_t/4$. Summing over $t$ and using Lemma~\ref{lem:epstlowerbound}, we get 
\begin{align*}
    \Reg(\cA) \geq \gamma\frac{\sum_{t=1}^T\sqrt{ \sum_{\cS_t \cap I \neq \emptyset} w_{I}^2}}{(4\sqrt{2\pi})} = \frac{\sum_{t=1}^T\sqrt{ \sum_{\cS_t \cap I \neq \emptyset} w_{I}^2}}{16\sqrt{2\pi}} \geq \ILB(\cS)/50.
\end{align*}

\end{proof}

\end{document}